\definecolor{darkblue}{rgb}{0.0,0.0,0.5}
\tikzset{
  treenode/.style = {align=center, inner sep=4pt, text centered,
    font=\scshape\small},
  interm/.style = {treenode, black, draw=white,
    fill=white,minimum height=1em},
  leaf/.style = {treenode, black,
    font=\footnotesize\it}
}
\theoremstyle{definition}
\newtheorem{defn}{Definition}
\newtheorem{rmk}{Remark}
\newcommand{\ts}[3]{#1_{#2}^{#3}}
\newcommand{\tsi}[4]{#1_{#3}^{#4}(#2)}
\newcommand{\disc}{\ensuremath{\Delta}}
\newcommand{\error}{\mathcal L}
\newcommand{\expect}{\mathbb E}
\newcommand{\nextdist}{\ensuremath{\mathcal D}\xspace}
\newcommand{\dist}{\ensuremath{\mathcal D'}\xspace}
\newcommand{\radem}{\ensuremath{\mathfrak R}}
\newcommand{\hphi}{\ensuremath{\Phi_{\text{\scriptsize s2s}}}}
\newcommand{\nhphi}{\ensuremath{\Phi_{\text{\scriptsize loc}}}}
\newcommand{\hybloss}{\ensuremath{\error_{\text{\scriptsize hyb}}}}
\newcommand{\nhloss}{\ensuremath{\error_{\text{\scriptsize loc}}}}
\newcommand{\hbeta}{\ensuremath{\beta_{\text{{\scriptsize s2s}}}}}
\newcommand{\nlsum}{\ensuremath{\sum\nolimits}}
\newcommand{\train}{\ensuremath{{\bf Z}}\xspace}
\newcommand{\past}{\ensuremath{{\bf Y}}\xspace}
\newcommand{\tpast}{\ensuremath{{\bf Y}'}\xspace}
\newcommand{\hybphi}{\ensuremath{\Phi_{\text{\scriptsize hyb}}}}
\theoremstyle{plain}
\newtheorem{theorem}{Theorem}
\newtheorem{lemma}{Lemma}
\newtheorem{prop}{Proposition}
\theoremstyle{plain}
\newtheorem*{rep@theorem}{\rep@title}
\newcommand{\newreptheorem}[2]{%
\newenvironment{rep#1}[1]{%
 \def\rep@title{#2 \ref{##1}}%
 \begin{rep@theorem}}%
 {\end{rep@theorem}}}
\numberwithin{equation}{section}
\numberwithin{theorem}{section}
\newcommand{\ignore}[1]{}
\newif\ifarxiv
\newif\ificml
\title{Foundations of Sequence-to-Sequence Modeling for Time Series}
\author{\name Vitaly Kuznetsov \email{vitalyk@google.com}\\
  \addr{Google Research, New York, NY} \\
  \name Zelda Mariet\thanks{Authors are in alphabetical order.} \email{zelda@csail.mit.edu}\\
  \addr{Massachusetts Institute of Technology, Cambridge, MA}
}
\date{}
\begin{document}
\maketitle

\begin{abstract}
  The availability of large amounts of time series data, paired with the performance of deep-learning algorithms on a broad class of problems, has recently led to significant interest in the use of sequence-to-sequence models for time series forecasting. We provide the first theoretical analysis of this time series forecasting framework. We include a comparison of sequence-to-sequence modeling to classical time series models, and as such  our theory can serve as a quantitative guide for practitioners choosing between different modeling methodologies.
\end{abstract}

\section{Introduction}
Time series analysis is a critical component of real-world applications such as
climate modeling, web traffic prediction, neuroscience,
as well as economics and finance. We focus on the fundamental
question of time series forecasting. Specifically, we study the task of
forecasting the next $\ell$ steps of an
$m$-dimensional time series $\past$, where $m$ is assumed to be very large. 
For example, in climate modeling, $m$ may correspond to the number of locations
at which we collect historical observations, and more generally to the number of sources which provide us with time series.
 
Often, the simplest way to tackle this problem is to approach it as
$m$ separate tasks, where for each of the $m$ dimensions
we build a model to forecast the univariate time series corresponding to that dimension.
Auto-regressive and state-space models~\citep{engle1982, bollerslev1986,brockwell1986,box1990,hamilton1994}, as well as non-parametric approaches
such as RNNs~\citep{bianchi17}, are often used in this setting.
To account for correlations between different time series,
these models have also been generalized
to the multivariate case
\citep{Lutkepohl06,lutkepohl2007,song2011,han2015,han2015b,banbura2010,basu2015,song2011,basu2015,sun2015,negahban2011,yu2016,lv2015}.
In both univariate and multivariate settings, an observation at time $t$ is treated as
a single sample point, and the model tries to capture relations between
observations at times $t$ and $t+1$. Therefore, we refer to these models
as \emph{local}.

In contrast, an alternative methodology based on treating $m$ univariate time series as $m$ samples drawn from some unknown distribution has also gained popularity in recent years. In this setting, each of the $m$ dimensions of $\past$ is treated as a separate example and a single model is learned from these $m$ observations. Given $m$ time series of length $T$, this model learns to map past vectors of length $T-\ell$ to corresponding future vectors of length $\ell$. LSTMs and RNNs~\citep{hochreiter1997}
are a popular choice of model class for this setup~\citep{deepar,GoelMylnykBanerjee2017,YuZhengAnandkumarYue2017,LiYuShahabiLiu2017,laptev2017,zhu2017}.\footnote{Sequence-to-sequence models are also among the winning solutions
in the recent time series forecasting competition: \tiny{\url{https://www.kaggle.com/c/web-traffic-time-series-forecasting}}.}
Consequently,
we refer to this framework as \emph{sequence-to-sequence modeling}.


While there has been progress in understanding the generalization ability of
local models~\citep{yu94,meir2000,mohri2009,mohri2010,kuznetsov14,kuznetsov2015,kuznetsov16,KuznetsovMohri17,zimin2017}, to the best of our knowledge the generalization
properties of sequence-to-sequence modeling have not yet been studied, raising the following natural questions:
\begin{itemize}
  \item What is the generalization ability of sequence-to-sequence models and
  how is it affected by the statistical properties of the underlying
  stochastic processes (e.g. non-stationarity, correlations)?
\item When is sequence-to-sequence modeling preferable to local modeling,
  and vice versa?
\end{itemize}
We provide the first generalization guarantees for time series forecasting with sequence-to-sequence models. Our results are expressed in terms of simple, intuitive measures of non-stationarity and correlation strength between different time series and hence explicitly depend on the key components of the learning
problem.

\begin{table*}[t]
  \caption{Summary of local, sequence-to-sequence, and hybrid models.}
  \label{tab:setup}
  \begin{center}
    \begin{small}
      \begin{tabular}{lcccr}
        \toprule
        \textsc{Learning Model} & \textsc{Training set} & \textsc{Hypothesis} & \textsc{Example} \\
        \midrule
        \textsc{UniVar. local}     & $\train_i=\{(\tsi Yi{t-1-p}{t-1}, Y_t(i)) : p \le  t \le T\}$& $h_i: \mathcal Y^p \to \mathcal Y$ & ARIMA \\
        \textsc{MultiVar. local}   & $\train=\{(\tsi Y\cdot{t-1-p}{t-1}, Y_t(\cdot)) : p \le t \le T\}$& $h: \mathcal Y^{m \times p} \to \mathcal Y^m$& VARMA \\
        \textsc{Seq-to-seq} & $\train=\{(\tsi Yi1{T-1}, Y_T(i)) : 1 \le i \le m\}$& $h: \mathcal Y^{T} \to \mathcal Y$& Neural nets\\
        \textsc{Hybrid}               & $\train=\{(\tsi Yi{t-p}{t-1}, Y_{t}(i)) : 1 \le i \le m, p \le t \le T\}$& $h: \mathcal Y^{p} \to \mathcal Y$& Neural nets \\
        \bottomrule
      \end{tabular}
    \end{small}
  \end{center}
\end{table*}

\begin{figure*}[t]
  \centering
  \begin{subfigure}{.45\textwidth}
    \def\svgwidth{\textwidth}
    \begin{center}
      \ifarxiv
      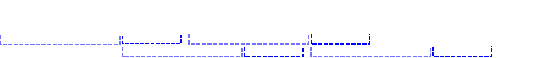
      \fi
      \ificml
      \input{local.pdf_tex}
      \fi
    \end{center}
    \caption{The local model trains each $h_{\text{\scriptsize loc}, i}$ on time series $Y(i)$ split into multiple (partly overlapping) examples.}
  \end{subfigure}\hskip 1cm
  \begin{subfigure}{.45\textwidth}
    \begin{center}
    \def\svgwidth{\textwidth}
    \ifarxiv
    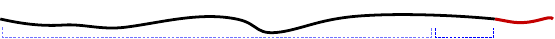
    \fi
    \ificml
    \input{s2s.pdf_tex}
    \fi
    \vskip 1em
  \end{center}
    \caption{The sequence-to-sequence trains $h_{\text{\scriptsize s2s}}$ on $m$ time series split into (input, target) pairs.}
  \end{subfigure}
  \caption{Local and sequence-to-sequence splits of a one dimensional time series into training and test examples.}
  \label{fig:split-set}
\end{figure*}

We compare our generalization bounds to guarantees for local models
and identify regimes under which one methodology is superior to the other. Therefore, our theory may also serve as a quantitative guide for
a practitioner choosing the right modeling approach.

The rest of the paper is organized as follows: in Section~\ref{sec:setup},
we formally define sequence-to-sequence and local modeling.
In Section~\ref{sec:prelim}, we define the key tools that we require for
our analysis. Generalization bounds for sequence-to-sequence models
are given in Section~\ref{sec:dependent}. We compare sequence-to-sequence
and local models in Section~\ref{sec:comp}. Section~\ref{app:hybrid}
concludes this paper with a study of a setup that is a hybrid of the
local and sequence-to-sequence models.

\section{Sequence-to-sequence modeling}
\label{sec:setup}

We begin by providing a formal definition of sequence-to-sequence modeling.  The learner receives a multi-dimensional time series $\past \in \mathcal Y^{m \times T}$ which we view as $m$ time series of same length $T$. We denote by $Y_t(i)$ the value of the $i$-th time series at time $t$ and write $Y_a^b(i)$
to denote the sequence $(Y_a(i), Y_{a+1}(i), \ldots, Y_b(i))$.
Similarly, we let $Y_t(\cdot) = (Y_t(1), \ldots, Y_t(m))$ and
$Y_a^b(\cdot) = (Y_a(\cdot), \ldots, Y_b(\cdot))$. In particular, $\past \equiv Y_1^T(\cdot)$.
In addition, the sequence $Y_1^{T-1}(\cdot)$ is of a particular importance in our
analysis and we denote it by $\mathbf{Y}'$.

The goal of the learner is to predict $Y_{T+1}(\cdot)$.\footnote{We are often interested in long term forecasting, i.e. predicting $Y_{T+1}^{T+\ell}(\cdot)$ for $\ell \geq 1$. For simplicity, we only consider the case of $\ell=1$. However, all our results extend immediately to $\ell \ge 1$.}  We further assume that our input $\past$ is partitioned into a training set of $m$ examples
$\train = \{Z_1, \ldots, Z_m\}$, where each $Z_i = (Y_1^{T-1}(i), Y_T(i)) \in \mathcal Y^T$. The learner's objective is
to select a hypothesis
$h: \mathcal{Y}^{T} \to \mathcal{Y}$ from a given hypothesis set $\mathcal H$
that achieves a small generalization error:
\begin{equation*}
  \error(h \mid \past) = \frac{1}{m} \sum_{i=1}^m \expect_{\nextdist} \left[L(h(Y_1^T(i)), Y_{T+1}(i)) \mid \past \right],
\end{equation*}
where $L \colon \mathcal{Y} \times \mathcal{Y} \rightarrow [0,M]$
is a bounded\footnote{Most of the results in this paper
  can be straightforwardly extended to unbounded case assuming $Y$ is
  sub-Gaussian.} loss function and
$\nextdist$ is the distribution of $Y_{T+1}$ conditioned on the past $\past$.

In other words, the learner seeks a hypothesis $h$ that maps sequences of past $Y_1(i), \ldots, Y_T(i)$ values to sequences of future values $Y_{T+1}(i), \ldots, Y_{T+\ell}(i)$, justifying our choice of ``sequence-to-sequence'' terminology.\footnote{In practice, each $Z_i$ may start at a different, arbitrary time $t_i$, and may furthermore include some additional features $X_i$, i.e. $Z_i = (Y_{t_i}^{T-1}(i), X_i, Y_{T}(i))$. Our results can be extended to this case as well using an appropriate choice of the hypothesis set.} Incidentally, the machine translation problem studied by~\citet{NIPS2014_5346} under the same name represents a special case of our problem when sequences (sentences) are independent and data is stationary. In fact, LSTM-based approaches used in aforementioned translation problem are also common for time series forecasting~\citep{deepar,GoelMylnykBanerjee2017,YuZhengAnandkumarYue2017,LiYuShahabiLiu2017,laptev2017,zhu2017}. However, feed-forward NNs have also been successfully applied in this framework~\citep{romeu2013} and  in practice, our definition allows for any set of functions $\mathcal H$
that map input sequences to output sequences. For instance, we can train a feed-forward NN to map $Y_1^{T-1}(i)$ to $Y_T(i)$ and at inference time use
$Y_2^T(i)$ as input to obtain a forecast for $Y_T(i)$.\footnote{As another example, a runner-up in the Kaggle forecasting competition (\url{https://www.kaggle.com/c/web-traffic-time-series-forecasting}) used a combination of boosted decision trees and feed-forward networks, and as such employs the sequence-to-sequence approach.}

\ignore{
By abuse of notation, we will write 
\begin{equation*}
  \error(h \mid \tpast) = \frac{1}{m} \nlsum_{i=1}^m \expect_{\dist} \left[L(h(Y_1^{T-1}(i)), Y_{T}(i)) \mid \tpast \right]
\end{equation*}
the expected error of $h$ on the training set, where $\dist$ is the distribution of $Y_T$ conditioned on $\tpast$.}

\ignore{
In practice, each $Z_i$ may start at a different, arbitrary time $t_i$, and may furthermore include some additional features $X_i$, i.e. $Z_i = (Y_{t_i}^{T-1}(i), X_i, Y_{T}(i))$.
Additionally, we are often interested in long term forecasting, i.e. predicting $Y_{T+1}^{T+\ell}(i)$ for $\ell \geq 1$. Hence, our learner searches for a hypothesis $h$ that maps a historical sequence $Y_{t_i}^{t_i+T}(i)$ to a future sequence $Y_{t_i+T+1}^{t_i+T + \ell}(i)$ and so we refer to this approach
as \emph{sequence-to-sequence modeling}. In fact, LSTMs and RNNs are common choices for the hypothesis space $\mathcal H$~\citep{deepar,GoelMylnykBanerjee2017,YuZhengAnandkumarYue2017,LiYuShahabiLiu2017,laptev2017,zhu2017}. However, feed-forward NNs have also been successfully applied in this framework~\citep{romeu2013} and  in practice, our definition allows for any set of functions $\mathcal H$
that map input sequences to output sequences. For instance, we can train a feed-forward NN to map $Y_1^{T-1}(i)$ to $Y_T(i)$ and at inference time use
$Y_2^T(i)$ as input to obtain a forecast for $Y_T(i)$.\footnote{As another example, a runner-up in the Kaggle forecasting competition (\url{https://www.kaggle.com/c/web-traffic-time-series-forecasting}) used a combination of boosted decision trees and feed-forward networks, and as such employs the sequence-to-sequence approach.}
}

We contrast sequence-to-sequence modeling to \emph{local} modeling, which consists of splitting each time series $Y(i)$ into
a training set $\train_i = \{Z_{i,1}, \ldots, Z_{i,T}\}$, where
$Z_{i,t} = (Y_{t-p}^{t-1}(i), Y_t(i))$ for some $p \in \mathbb N$, then learning a separate  
hypothesis $h_i$ for each $\train_i$. 
Each $h_i$ models relations between observations that are close in time,  which is why we refer to this framework as local modeling.
As in sequence-to-sequence modeling, the goal of a local learner is to achieve a small generalization error for $h_{\text{loc}} = (h_1, \ldots, h_m)$, given by:
\begin{equation*}
  \error(h_{\text{loc}} \mid \past) = \frac{1}{m} \sum_{i=1}^m \expect_{\nextdist} \left[L(h_i(Y_{T-p}^T(i)), Y_{T+1}(i)) \mid \past \right].
\end{equation*}
In order to model correlations between different time series, it is also
common to split $\past$ into one single set of multivariate examples $\train = \{Z_{1}, \ldots, Z_T\}$, where
$Z_i = (Y_{t-p}^{t-1}(\cdot), Y_t(\cdot))$, and to learn a single hypothesis $h$ that maps $\mathcal{Y}^{m \times p} \to \mathcal{Y}^m$. 
As mentioned earlier, we consider this approach a variant of local modeling, since $h$ in this case
again models relations between observations that are close in time.

Finally, \emph{hybrid} or \emph{local sequence-to-sequence} models, which
interpolate between local and sequence-to-sequence approaches, have also been considered in the literature~\citep{zhu2017}.
In this setting, each local example is split across the temporal dimension into smaller examples of length $p$, which are then used to train a single sequence-to-sequence model $h$. We discuss bounds for this specific case in Section~\ref{app:hybrid}.

Our work focuses on the statistical properties of the sequence-to-sequence model.
We provide the first generalization bounds for sequence-to-sequence and hybrid models, and compare these to similar bounds for local models, allowing us to identify regimes in which one methodology is more likely to succeed. \ignore{Our theory can serve as a guide to a practioner choosing between different modeling approaches.}

Aside from learning guarantees, there are other important considerations that may lead a practitioner to choose one approach over others. For instance,
the local approach is trivially parallelizable; on the other hand, when additional features $X_i$ are available, sequence-to-sequence modeling provides an elegant solution to the \emph{cold start problem} in which at test time we are required to make predictions on time series for which no historical data is available.


\newcommand{\breakingcomma}{%
  \begingroup\lccode`~=`,
  \lowercase{\enalign*\expandafter\def\expandafter~\expandafter{~\penalty0 }}}

\section{Correlations and non-stationarity}
\label{sec:prelim}

In the standard supervised learning scenario, it is common to assume that
training and test data are drawn {\it i.i.d.}~from some unknown distribution.
However, this assumption does not hold for time series, where
observations at different times as well as across different series
may be correlated. Furthermore, the data-generating distribution
may also evolve over time.

These phenomena present a significant
challenge to providing guarantees in time series forecasting.
To quantify non-stationarity and correlations, we introduce the notions of
mixing coefficients and discrepancy, which are defined below.

The final ingredient we need to analyze  sequence-to-sequence learning is the Rademacher complexity $\radem_m(\mathcal F)$ of a family of functions $\mathcal F$ on a sample of size $m$, which has been previously used to characterize learning in the i.i.d.~setting~\citep{koltchinskii2002,mohri-book}.
In App.~\ref{app:radem}, we include a brief discussion of its properties.

\subsection{Expected mixing coefficients}

To measure the strength of dependency between time series,
we extend the notion of $\beta$-mixing coefficients~\citep{doukhan1994} to
\emph{expected $\beta$-mixing coefficients}, which are
a more appropriate measure of correlation in sequence-to-sequence modeling.

\begin{defn}[Expected $\hbeta$ coefficients]
  Let $i, j \in [m] \triangleq \{1, \ldots, m\}$. We define
  \begin{align*}
    \hbeta(i,j)=&~ \expect_{\tpast} \Big[\|P(Y_{T}(i) | \tpast ) P(Y_{T}(j) | \tpast ) -  P(Y_{T}(i), Y_{T}(j) | \tpast )\|_{TV}\Big],
  \end{align*}
  where $TV$ denotes the total variations norm. For a subset $I \subseteq [m]$, we define \[\hbeta(I) = \sup_{i, j \in C} \hbeta(i,j).\]
\end{defn}
The coefficient $\hbeta(i,j)$ captures how close $Y_{T+1}(i)$ and $Y_{T+1}(j)$ are to being independent, given $\past'$ (and averaged over all realizations of $\past'$). We further study these coefficients in Section~\ref{sec:dependent}, where we derive explicit upper bounds on expected $\hbeta$-mixing coefficients for various standard classes of stochastic processes, including spatio-temporal and hierarchical time series.

We also define the following related notion of $\bar\beta$-coefficients.
\begin{defn}[Unconditional $\bar\beta$ coefficients]
  Let $i, j \in [m] \triangleq \{1, \ldots, m\}$. We define
\begin{align*}
    \bar\beta(i,j)\! =& \|\Pr(\tsi Yi1T, \tsi Yj1T) - \Pr(\tsi Yi1T)\Pr(\tsi Yj1T)\|_{TV}\\
    \bar\beta'(i,j)\! =& \|\Pr(\tsi Yi1{T-1}, \tsi Yj1{T-1}) - \Pr(\tsi Yi1{T-1})\Pr(\tsi Yj1{T-1})\|_{TV}
\end{align*}
and as before, for a subset $I$ of $[m]$, write $\bar \beta(I) = \sup_{i,j \in I} \bar\beta(i,j)$ (and similarly for $\bar \beta'$).
\end{defn}

Note that $\hbeta$ coefficients measure the strength of dependence between time
series conditioned on the history observed so far, while $\bar \beta$ coefficients measure the (unconditional) strength of dependence between time series. The following result relates these two notions.

\begin{lemma}
  For $\bar \beta$ (and $\bar \beta'$ similarly), we have the following upper bound:
  \begin{align*}
    \bar \beta(i,j) \le& \hbeta(i,j) + \expect_{\past'}\Big[\text{Cov}\Big(\Pr(Y_T(i) \hiderel \mid \past'), \Pr(Y_T(j) \hiderel \mid \past')\Big)\Big]
  \end{align*}
\end{lemma}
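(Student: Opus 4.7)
The idea is to introduce an auxiliary distribution $\widetilde Q$ on $\mathcal Y^T\times\mathcal Y^T$ that enforces conditional independence of $Y_T(i)$ and $Y_T(j)$ given $\past'$ while retaining the true joint law of $\past'$, and then apply the triangle inequality for total variation. Writing $y_i=(u_i,v_i)$, $y_j=(u_j,v_j)$ with $u_\cdot = Y_1^{T-1}(\cdot)$ and $v_\cdot = Y_T(\cdot)$, I would define
\[
\widetilde Q(y_i,y_j)=\expect_{\past'}\!\bigl[\mathbb{1}\{Y_1^{T-1}(i)=u_i,\,Y_1^{T-1}(j)=u_j\}\,\Pr(v_i\mid\past')\,\Pr(v_j\mid\past')\bigr],
\]
so that by the triangle inequality $\bar\beta(i,j)\le\|\Pr(Y_1^T(i),Y_1^T(j))-\widetilde Q\|_{TV}+\|\widetilde Q-\Pr(Y_1^T(i))\Pr(Y_1^T(j))\|_{TV}$.

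For the first term, since $\Pr(Y_1^T(i),Y_1^T(j))$ and $\widetilde Q$ share the same marginal over $\past'$ and only differ in the conditional law of $(Y_T(i),Y_T(j))$ given $\past'$, the chain-rule bound for TV (equivalently, convexity of total variation) yields
\[
\|\Pr(Y_1^T(i),Y_1^T(j))-\widetilde Q\|_{TV}\le\expect_{\past'}\|\Pr(Y_T(i),Y_T(j)\mid\past')-\Pr(Y_T(i)\mid\past')\Pr(Y_T(j)\mid\past')\|_{TV}=\hbeta(i,j),
\]
which matches the definition of $\hbeta$.

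For the second term, observe that pointwise $\widetilde Q(y_i,y_j)=\expect_{\past'}[F_{y_i}(\past')\,G_{y_j}(\past')]$ and $\Pr(y_i)\Pr(y_j)=\expect_{\past'}[F_{y_i}(\past')]\,\expect_{\past'}[G_{y_j}(\past')]$, where $F_{y_i}(\past')=\mathbb{1}\{Y_1^{T-1}(i)=u_i\}\,\Pr(v_i\mid\past')$ and $G_{y_j}$ is defined analogously. The difference is therefore exactly $\mathrm{Cov}_{\past'}(F_{y_i},G_{y_j})$, and integrating its absolute value over $(y_i,y_j)$ produces the covariance-type expression on the right-hand side of the lemma. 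Combining the two bounds closes the proof. The same scheme, applied with $Y_1^{T-1}$ in place of $Y_1^T$ throughout, gives the analogous bound for $\bar\beta'$.

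The main obstacle I expect is aligning the output of the second step with the precise notation in the lemma: the term $\mathrm{Cov}(\Pr(Y_T(i)\mid\past'),\Pr(Y_T(j)\mid\past'))$ must be read as the signed-measure-valued quantity $(v_i,v_j)\mapsto\mathrm{Cov}_{\past'}(\Pr(Y_T(i)=v_i\mid\past'),\Pr(Y_T(j)=v_j\mid\past'))$ together with an implicit $L^1$ integral in $(v_i,v_j)$, and the past-indicator factors $\mathbb{1}\{Y_1^{T-1}(i)=u_i\}$ absorbed into $F_{y_i}$ have to be tracked carefully so that the outer $\expect_{\past'}$ does not double-count. The first-term bound is a routine application of the TV chain rule, so the real technical work lies in this covariance identification.
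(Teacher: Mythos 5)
Your proof is correct and follows essentially the same route as the paper's: your auxiliary distribution $\widetilde Q$ is exactly the quantity $\expect_{\tpast}[\Pr(\cdot\mid\tpast)\Pr(\cdot\mid\tpast)]$ that the paper adds and subtracts, the first term is controlled by convexity of total variation to give $\hbeta(i,j)$, and the second term is the same covariance identity. If anything, your version is more explicit about the measure-valued reading of the covariance term, which the paper's notation leaves implicit.
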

The proof of this result (as well as all other proofs in this paper) is deferred to the supplementary material.

Finally, we require the notion of \emph{tangent collections}, within which time series are independent.
\begin{defn}[Tangent collection]
  Given a collection of time series $C=\{Y(1), \ldots, Y(c)\}$, we define the tangent collection $\widetilde C$ as $\{\widetilde Y(1), \ldots, \widetilde Y(c)\}$ such that $\widetilde Y(i)$ is drawn according to the marginal $\Pr(Y(i))$ and such that $\widetilde Y(i)$ and $\widetilde Y(i')$ are independent for $i\neq i'$.
\end{defn}

The notion of tangent collections, combined with mixing coefficients, allows us to reduce the analysis of correlated time series in $C$ to the analysis of independent time series in $\widetilde C$ (see Prop.~\ref{prop:yu} in the appendix). 

\subsection{Discrepancy}
Various notions of discrepancy have been previously used to measure
the non-stationarity of the underlying stochastic processes with respect
to the hypothesis set $\mathcal H$ and loss function
$L$ in the analysis of local models~\citep{kuznetsov2015,zimin2017}.
In this work, we introduce a notion of discrepancy specifically tailored to
sequence-to-sequence modeling scenario, taking into account both the hypothesis set and the loss function.
\begin{defn}[Discrepancy]
  \label{def:discrepancy}
  Let $\nextdist$ be the distribution of $Y_{T+1}$ conditioned on $\past$ and let $\dist$ be the distribution of $Y_T$ conditioned on $\mathbf{Y}'$. We define the discrepancy $\disc$ as
$   \Delta =\sup_{h \in \mathcal H} \left|\error(h \mid \past) - \error(h \mid \mathbf{Y}') \right|$ 
  where
$\error(h\mid \past') = \frac{1}{m} \sum_{i=1}^m \expect_{\dist} \left[L(h(Y_1^{T-1}(i)), Y_{T}(i)) \mid \past \right]$.
\end{defn}

The discrepancy forms a pseudo-metric on the space of probability distributions
and can be completed to a Wasserstein metric (by extending $\mathcal H$ to all
Lipschitz functions). This also immediately implies that the discrepancy
can be further upper bounded by the $l_1$-distance and
by relative entropy between conditional distributions of $Y_T$ and $Y_{T+1}$
(via Pinsker's inequality).
However, unlike these other divergences, the discrepancy takes into
account both the hypothesis set and the loss function, making it a finer
measure of non-stationarity.

However, the most important
property of the discrepancy is that it can be upper
bounded by the related notion of \emph{symmetric discrepancy}, which can be estimated from data.
\begin{defn}[Symmetric discrepancy]
  \label{def:sym-discrepancy}
 We define the symmetric discrepancy $\disc_s$ as
 \begin{align*}
   \disc_s =& \frac{1}{m} \sup_{h,h' \in \mathcal H} \Big|  \nlsum_{i=1}^m L(h(\ts Y{1}{T}(i)), h'(\ts Y{1}{T}(i))) -L(h(\ts Y{1}{T-1}(i)), h'(\ts Y{1}{T-1}(i)))\Big|.
 \end{align*}
\end{defn}

\begin{prop}
  \label{prop:disc_s}
  Let $\mathcal H$ be a hypothesis space and let $L$ be a bounded loss function which respects the triangle inequality. Let $h \in \mathcal H$ be any hypothesis. Then,
$    \disc \le \disc_s + \error(h \hiderel\mid \past) + \error(h \hiderel\mid \past').
$
\end{prop}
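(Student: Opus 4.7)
\textbf{Proof plan for Proposition~\ref{prop:disc_s}.} The plan is to use the classical add-and-subtract trick used when comparing a target hypothesis (the one achieving the sup in the discrepancy) with a ``reference'' hypothesis $h$. Let $h^{\ast}\in\mathcal H$ attain (or nearly attain, by an $\varepsilon$-argument) the supremum in the definition of $\disc$, and assume without loss of generality that $\error(h^{\ast}\hiderel\mid\past)\ge\error(h^{\ast}\hiderel\mid\past')$, so that $\disc=\error(h^{\ast}\hiderel\mid\past)-\error(h^{\ast}\hiderel\mid\past')$; the reverse case is symmetric.

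The decomposition I would use is a three-term chain:
\begin{align*}
\disc &= \Big[\error(h^{\ast}\hiderel\mid\past) - \tfrac{1}{m}\nlsum_i L(h^{\ast}(\ts Y1T(i)), h(\ts Y1T(i)))\Big]\\
 &\quad + \Big[\tfrac{1}{m}\nlsum_i L(h^{\ast}(\ts Y1T(i)), h(\ts Y1T(i))) - \tfrac{1}{m}\nlsum_i L(h^{\ast}(\ts Y1{T-1}(i)), h(\ts Y1{T-1}(i)))\Big]\\
 &\quad + \Big[\tfrac{1}{m}\nlsum_i L(h^{\ast}(\ts Y1{T-1}(i)), h(\ts Y1{T-1}(i))) - \error(h^{\ast}\hiderel\mid\past')\Big].
\end{align*}
The middle bracket is, by definition of $\disc_s$ (applied with $h$ and $h^{\ast}$ playing the roles of the two hypotheses in the sup), bounded by $\disc_s$.

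For the first bracket, I would apply the triangle inequality of $L$ pointwise, $L(h^{\ast}(\ts Y1T(i)),Y_{T+1}(i))\le L(h^{\ast}(\ts Y1T(i)),h(\ts Y1T(i)))+L(h(\ts Y1T(i)),Y_{T+1}(i))$, and take the conditional expectation with respect to $\nextdist$ given $\past$ (note that the first term on the right is a function of $\past$, hence deterministic under this conditioning). Summing over $i$ and dividing by $m$ yields that the first bracket is at most $\error(h\hiderel\mid\past)$. The third bracket is handled identically, using the triangle inequality in the direction $L(h^{\ast}(\ts Y1{T-1}(i)),h(\ts Y1{T-1}(i)))\le L(h^{\ast}(\ts Y1{T-1}(i)),Y_T(i))+L(Y_T(i),h(\ts Y1{T-1}(i)))$, then conditioning on $\past'$ and averaging; this gives an upper bound of $\error(h\hiderel\mid\past')$. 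Combining the three bounds gives the desired inequality.

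I do not expect any serious obstacle here: the argument is essentially a ``change of reference hypothesis'' combined with two applications of the triangle inequality, and the only subtle point is being careful about which quantities are deterministic under which conditioning (the hypothesis outputs on $\ts Y1T(i)$ are deterministic under conditioning on $\past$, and those on $\ts Y1{T-1}(i)$ are deterministic under conditioning on $\past'$), so that the auxiliary terms can be pulled outside the conditional expectations without modification. The assumption that $L$ obeys the triangle inequality is used exactly twice, once in each of the first and third brackets.
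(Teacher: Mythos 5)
Your proposal is correct and follows essentially the same route as the paper's proof: the same two applications of the triangle inequality of $L$ (with $h$ as the reference hypothesis) around the add-and-subtract of $\tfrac 1m\nlsum_i L(h^{\ast}(\ts Y1{T-1}(i)),h(\ts Y1{T-1}(i)))$, with the middle term absorbed into $\disc_s$; the paper merely writes it as a chain of inequalities starting from $\error(h^{\ast}\mid\past)$ and takes the supremum at the end rather than fixing a maximizer up front. The only (shared, implicit) caveat is that bounding the third bracket by $\error(h\mid\past')$ uses $L(Y_T(i),h(\cdot))=L(h(\cdot),Y_T(i))$, i.e.\ symmetry of $L$, which the paper also assumes tacitly.
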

We do not require test labels to
evaluate $\disc_s$. Since $\disc_s$
only depends on the observed data, $\disc_s$ can be computed directly
from samples, making it a useful tool to assess
the non-stationarity of the learning problem.

Another useful property of $\disc_s$ is that, for certain classes of
stochastic processes, we can provide a direct analysis of this quantity.
\begin{prop}
  \label{prop:disc_bound}
  Let $I_1, \cdots, I_k$ be a partition of $\{1, \ldots, m\}$, $C_1, \ldots, C_k$ be the corresponding partition of $\past$ and
$C'_1, \ldots, C'_k$ be the corresponding partition of $\past'$. Write $c=\min_j |C_j|$, and define the \emph{expected discrepancy}
    \begin{align*}
    \disc_e =& \sup_{h,h' \in \mathcal H} \Big[\expect_Y[L(h(\ts Y1{T}),h'(\ts Y1{T}))] - \expect_Y[L(h(\ts Y1{T-1}),h'(\ts Y1{T-1}))]\Big].
  \end{align*}
  Then, writing $\radem$ the Rademacher complexity (see Appendix~\ref{app:radem}) we have with probability $1-\delta$,
  \begin{align*}
    \disc_s \le& \disc_e + {\max \Big(\max_j \radem_{|C_j|}(\widetilde C'_j), \max_j \radem_{|C_j|}(\widetilde C_j)\Big)} + \sqrt{\frac 1{2c}\log \frac{2k}{\delta - \sum_j (|I_j|-1)[\bar \beta(I_j)+ \bar \beta'(I_j)]}}.
  \end{align*}
\end{prop}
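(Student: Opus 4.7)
The plan is to apply a block-decomposition-plus-tangent-collection argument separately to the two halves of $\disc_s$. Introducing the shorthand $F(h,h')=\tfrac{1}{m}\nlsum_{i=1}^m L(h(\ts Y1T(i)),h'(\ts Y1T(i)))$ and $G(h,h')=\tfrac{1}{m}\nlsum_{i=1}^m L(h(\ts Y1{T-1}(i)),h'(\ts Y1{T-1}(i)))$, so that $\disc_s=\sup_{h,h'}|F-G|$, the triangle inequality gives
\[
\disc_s \;\le\; \sup_{h,h'}\bigl(F-\expect[F]\bigr) \;+\; \sup_{h,h'}\bigl(\expect[G]-G\bigr) \;+\; \sup_{h,h'}\bigl(\expect[F]-\expect[G]\bigr),
\]
and the final term is by definition $\disc_e$. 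The proof then reduces to controlling the two empirical-vs-expectation suprema with high probability; I describe the argument for $F$, the $G$ term being identical up to swapping $\bar\beta$ for $\bar\beta'$ and $\widetilde C_j$ for $\widetilde C'_j$.

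I next split the empirical sum along the partition $I_1,\ldots,I_k$ and, on each block $C_j$, replace the correlated samples by the independent tangent collection $\widetilde C_j$ using Prop.~\ref{prop:yu} of the appendix: the joint law of $C_j$ differs from that of $\widetilde C_j$ in total variation by at most $(|I_j|-1)\bar\beta(I_j)$, so any event of tangent-probability at least $1-\delta_j$ has original-probability at least $1-\delta_j-(|I_j|-1)\bar\beta(I_j)$. On the tangent side, where the samples are independent, a standard symmetrization plus McDiarmid argument gives, with tangent probability at least $1-\delta_j$,
\[
\sup_{h,h'}\Bigl(\tfrac{1}{|C_j|}\nlsum_{i\in I_j} L(h(\widetilde Y_1^T(i)), h'(\widetilde Y_1^T(i))) - \expect[\,\cdot\,]\Bigr) \;\le\; \radem_{|C_j|}(\widetilde C_j) + \sqrt{\tfrac{1}{2|C_j|}\log \tfrac{1}{\delta_j}}.
\]

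To assemble the block bounds I use $|C_j|/m$ as convex weights and $|C_j|\ge c$ to obtain $\max_j\radem_{|C_j|}(\widetilde C_j)$ together with a confidence term of order $\sqrt{(2c)^{-1}\log(1/\delta_j)}$. Applying the same reasoning to $G$ and union-bounding with the common choice $\delta_j=\delta'/(2k)$ across all $2k$ blocks, the total failure probability does not exceed $\delta' + \sum_j(|I_j|-1)[\bar\beta(I_j)+\bar\beta'(I_j)]$, and solving $\delta'=\delta-\sum_j(|I_j|-1)[\bar\beta(I_j)+\bar\beta'(I_j)]$ yields exactly the log-denominator in the statement. Merging the $F$- and $G$-side Rademacher terms into the single $\max$ advertised in the bound gives the result.

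The main obstacle is the careful bookkeeping of the $\bar\beta$ penalty. Because the tangent swap is a total-variation shift rather than a high-probability statement, the mixing coefficients subtract from the available confidence $\delta$ rather than appearing additively in the bound itself, which is precisely why they end up inside the logarithm. The fact that the penalty scales linearly as $(|I_j|-1)\bar\beta(I_j)$, rather than as a quadratic $O(|I_j|^2)$, relies on telescoping the swap one series at a time through a chain of hybrid measures, which is exactly what Prop.~\ref{prop:yu} provides.
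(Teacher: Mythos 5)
Your proposal is correct and follows essentially the same route as the paper's proof: the same three-way decomposition isolating $\disc_e$ between two empirical-vs-expectation suprema, the same block split along $I_1,\ldots,I_k$ with the tangent-collection swap via Prop.~\ref{prop:yu} charging $(|I_j|-1)\bar\beta(I_j)$ (resp.\ $\bar\beta'(I_j)$) to the confidence level, and the same McDiarmid-plus-symmetrization step producing the $\radem_{|C_j|}$ terms and the $\sqrt{(2c)^{-1}\log(\cdot)}$ tail. The only cosmetic difference is that you fold the Rademacher term into each per-block high-probability bound before taking the max, whereas the paper centers at $\max_j\expect[\phi(\widetilde C_j)]$ first and bounds that expectation by the Rademacher complexity at the end; the two bookkeepings are equivalent.
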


The expected discrepancy $\disc_e$  can be computed analytically for many
classes of stochastic processes. For example, for stationary processes,
we can show that it is negligible. Similarly, for covariance-stationary\footnote{Recall that a process $X_1, X_2, \ldots$
  is called stationary if for any $l,k,m$, the distributions of
  $(X_k, \ldots, X_{k+l})$ and $(X_{k+m}, \ldots, X_{k+m+l})$ are the same.
  Covariance stationarity is a weaker condition that requires
  that $\expect[X_k]$ be independent of $k$ and that $\expect[X_k X_m] = f(k-m)$
for some $f$.}
processes with linear hypothesis sets and the squared loss function,
the discrepancy is once again negligible.
These examples justify our use of the discrepancy as a natural measure of
non-stationarity. In particular, the covariance-stationary example highlights that
the discrepancy takes into account not only the distribution of the stochastic
processes but also $\mathcal{H}$ and $L$.

\begin{prop}
  \label{prop:disc_stat}
  If $Y(i)$ is stationary for all $1 \le i \le m$, and $\mathcal H$ is a hypothesis space such that $h \in \mathcal H : \mathcal Y^{T-1} \to \mathcal Y$ (i.e. the hypotheses only consider the last $T-1$ values of $Y$), then $\disc_e =0$.
\end{prop}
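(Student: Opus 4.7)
The proof plan is essentially a one-line application of stationarity once the notation is unpacked correctly, so the bulk of the work is interpretive rather than computational.

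First, I would clarify the meaning of $h(Y_1^T)$ and $h(Y_1^{T-1})$ when the hypothesis $h$ takes inputs in $\mathcal Y^{T-1}$. By the assumption that hypotheses only consider the last $T-1$ values, $h(Y_1^T)$ means $h(Y_2^T)$ and $h(Y_1^{T-1})$ means $h(Y_1^{T-1})$ (the whole sequence, which is already of length $T-1$). So the defining expression for $\disc_e$ reduces to
\begin{equation*}
  \disc_e = \sup_{h,h' \in \mathcal H} \Big[\expect_Y[L(h(Y_2^T),h'(Y_2^T))] - \expect_Y[L(h(Y_1^{T-1}),h'(Y_1^{T-1}))]\Big].
\end{equation*}

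Next I would invoke stationarity of each $Y(i)$: for a stationary process, the joint distribution of $(Y_2,\ldots,Y_T)$ equals the joint distribution of $(Y_1,\ldots,Y_{T-1})$. Since $L(h(\cdot),h'(\cdot))$ is a deterministic measurable functional of the input sequence, the expectations $\expect_Y[L(h(Y_2^T),h'(Y_2^T))]$ and $\expect_Y[L(h(Y_1^{T-1}),h'(Y_1^{T-1}))]$ are computed by integrating the same bounded functional against two identical distributions, and hence coincide. Each bracketed term in the supremum is therefore $0$, yielding $\disc_e = 0$.

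The only mild subtlety (and the "hard part" if anything) is the notational one: without the convention that $h$ applied to a longer sequence picks out its last $T-1$ coordinates, the comparison of the two expectations would be between functionals of sequences of different lengths and stationarity would not directly apply. Once this reading is fixed, the statement is immediate from the definition of strict stationarity, and there is nothing else to prove.
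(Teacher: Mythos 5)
Your proof is correct and matches the paper's argument exactly: both reduce $h(Y_1^T)$ to $h(Y_2^T)$ via the hypothesis-space assumption and then use stationarity to equate the distributions of $Y_2^T$ and $Y_1^{T-1}$, making each bracketed term vanish. Your explicit discussion of the notational convention is a welcome clarification that the paper leaves implicit.
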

\begin{prop}
  \label{prop:disc_cov_stat}
  If $\past$ is covariance stationary for all $1 \le i \le m$, $L$ is the squared loss, and $\mathcal H$ is a linear hypothesis space $\{x \to w \cdot x \mid \|w\| \in \mathbb R^p \le \Lambda\}$, $\disc_e=0$.
\end{prop}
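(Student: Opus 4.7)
The strategy is to exploit the fact that, for a linear hypothesis class and the squared loss, the quantity $L(h(x), h'(x))$ reduces to a quadratic form in $x$, and that covariance stationarity forces the two relevant second-moment matrices to coincide. Concretely, for any $h(x) = w \cdot x$ and $h'(x) = w' \cdot x$ with $w, w' \in \mathbb R^p$ and $\|w\|, \|w'\| \le \Lambda$, set $u = w - w'$; then $L(h(x), h'(x)) = (u \cdot x)^2 = u^\top x x^\top u$, a quadratic form in $x$ with matrix $u u^\top$ that does not depend on where the window is in time.

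Consistent with the input dimension implied by $w \in \mathbb R^p$, each hypothesis acts on a length-$p$ subsequence, so the two windows compared inside $\disc_e$ are the last $p$ entries of $Y_1^T$ and of $Y_1^{T-1}$, respectively. Taking expectations yields
\begin{align*}
\expect_Y\bigl[L(h(Y_1^T), h'(Y_1^T))\bigr] &= u^\top M_T u, \\
\expect_Y\bigl[L(h(Y_1^{T-1}), h'(Y_1^{T-1}))\bigr] &= u^\top M_{T-1} u,
\end{align*}
where the $(i,j)$ entries of $M_T$ and $M_{T-1}$ are $\expect[Y_{T-p+i} Y_{T-p+j}]$ and $\expect[Y_{T-p-1+i} Y_{T-p-1+j}]$, respectively.

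By covariance stationarity, $\expect[Y_k]$ is independent of $k$ and $\expect[Y_k Y_m]$ depends only on $k - m$; hence each entry of $M_T$ equals the corresponding entry of $M_{T-1}$ (both are $f(i-j) + \mu^2$ for the autocovariance $f$ and common mean $\mu$), so $M_T = M_{T-1}$. Thus the signed quantity inside the supremum defining $\disc_e$ vanishes for every pair $(h, h')$, and we conclude $\disc_e = 0$.

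The only step requiring genuine care is the interpretation of the input dimension: the form $w \in \mathbb R^p$ forces us to read $\mathcal H$ as acting on a length-$p$ window, so the comparison in $\disc_e$ is between windows of equal length that differ only by a unit time shift. Once this is set up, covariance stationarity collapses both Toeplitz matrices to the same object in a single line, which is why neither the supremum over $\mathcal H$ nor the radius $\Lambda$ ever enters the bound.
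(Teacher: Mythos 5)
Your proof is correct and follows essentially the same route as the paper's: write $L(h(x),h'(x)) = (w-w')^\top x x^\top (w-w')$, observe that covariance stationarity makes the second-moment matrices of the two length-$p$ windows (shifted by one time step) coincide, and conclude that the difference inside the supremum is identically zero. If anything, your version is slightly more careful than the paper's, which calls the relevant matrix a covariance matrix when it is really the second-moment matrix $f(i-j)+\mu^2$ --- a distinction that is immaterial here since both are shift-invariant under covariance stationarity.
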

Another insightful example is the case
when $\mathcal H = \{h\}$: then, $\disc = 0$ even if $\past$
is non-stationary, which illustrates that learning is trivial for trivial hypothesis sets, even in non-stationary settings.

The final example that we consider in this section is the case of non-stationary periodic time series. Remarkably, we show that the discrepancy is still negligible in this case provided that we observe all periods with equal probability.

\begin{prop}
  \label{prop:disc_period}
  If the $Y(i)$ are periodic with period $p$ and the observed starting time of each $Y(i)$ is distributed uniformly at random in $[p]$, then $\disc_e=0$.
\end{prop}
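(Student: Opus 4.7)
The plan is to reduce to the stationary case treated in Proposition~\ref{prop:disc_stat} by showing that a periodic process with a uniformly distributed random phase is strictly stationary. Concretely, since $Y(i)$ has period $p$ and its observed starting time is drawn uniformly from $[p]$, there exist a deterministic periodic map $f_i : \mathbb{Z}/p\mathbb{Z} \to \mathcal{Y}$ and a random phase $\tau_i$ uniform on $\mathbb{Z}/p\mathbb{Z}$ such that $Y_t(i) = f_i(t + \tau_i)$, with arithmetic carried out in $\mathbb{Z}/p\mathbb{Z}$.

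The key observation is that shifting the time index is equivalent to shifting the phase: for any $s \in \nat$, $Y_{t+s}(i) = f_i(t + s + \tau_i) = f_i(t + \tau_i')$ with $\tau_i' = \tau_i + s$. Since the uniform distribution on $\mathbb{Z}/p\mathbb{Z}$ is invariant under translation, $\tau_i'$ is again uniform, so $\ts Y{1+s}{T+s}(i)$ and $\ts Y1T(i)$ have the same joint distribution for every $s \in \nat$. This is precisely strict stationarity of each $Y(i)$, so Proposition~\ref{prop:disc_stat} applies and immediately yields $\disc_e = 0$.

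Alternatively, one can argue directly, mirroring Proposition~\ref{prop:disc_stat}. Under the convention (implicit here and explicit in Proposition~\ref{prop:disc_stat}) that each $h \in \mathcal{H}$ depends on only the last $T-1$ coordinates of its input, we have $h(\ts Y1T(i)) = h(\ts Y2T(i))$; by the stationarity established above, $\ts Y2T(i)$ and $\ts Y1{T-1}(i)$ are identically distributed, so for every $h, h' \in \mathcal{H}$,
\begin{equation*}
\expect[L(h(\ts Y1T(i)), h'(\ts Y1T(i)))] = \expect[L(h(\ts Y1{T-1}(i)), h'(\ts Y1{T-1}(i)))].
\end{equation*}
Averaging over $i$ and taking the supremum over $h, h' \in \mathcal{H}$ gives $\disc_e = 0$.

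There is no genuine analytic obstacle; the substantive step is the reformulation of the periodic-with-uniform-phase assumption as strict stationarity, which amounts to the elementary fact that the uniform measure on $\mathbb{Z}/p\mathbb{Z}$ is invariant under translations. Once stationarity is in hand, the conclusion is a one-line consequence of Proposition~\ref{prop:disc_stat}. The only point requiring a bit of care is the bookkeeping of the modular arithmetic and the matching of phase-shift with time-shift.
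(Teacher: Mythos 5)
Your proof is correct and follows essentially the same route as the paper: both arguments observe that the uniform random starting phase makes the distribution of any length-$(T-1)$ window shift-invariant, so $\ts Y2{T}(i)$ and $\ts Y1{T-1}(i)$ are identically distributed and the stationary-case argument of Proposition~\ref{prop:disc_stat} applies. Your write-up is simply a more explicit version of the paper's terse reduction, spelling out the translation invariance of the uniform measure on $\mathbb{Z}/p\mathbb{Z}$.
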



\section{Generalization bounds}
\label{sec:dependent}
We now present our generalization bounds for time series prediction with sequence-to-sequence models. We write $\mathcal F = \{L \circ h: ~h \in \mathcal H\}$, where $f = L \circ h$ is the loss of hypothesis $h$ given by $f(h, Z_i) = L(h(\ts Y1{T-1}(i)),Y_{T})$. To obtain bounds on the generalization error $\error(h \mid \past)$,
we study the gap between $\error(h \mid \past)$ and the empirical error
$\widehat{\mathcal{L}}(h)$ of a hypothesis $h$, where
\[\widehat \error(h) = \frac{1}{m} \sum_{i=1}^m f\big(h, Z_i\big).\]

That is, we aim to give a high probability bound on
the supremum of the empirical process
$\Phi(\past) = \sup_h [ \error(h \mid \past) - \widehat \error(h)]$.
We take the following high-level approach: we first partition the training set \train into $k$ collections $C_1, \ldots, C_k$ such that within each collection, correlations between different time series are as weak as possible. We then analyze each collection $C_j$ by comparing the generalization error of sequence-to-sequence learning on $C_j$ to the sequence-to-sequence generalization error on the \emph{tangent} collection $\widetilde C_j$.

\ignore{
  For ease of notation, we write $\radem(\widetilde C)$ the Rademacher complexity of $\mathcal F$ on data drawn from the same distribution as $\widetilde C$; $\disc(\widetilde C)$ similarly, and

Define
\[\hphi(C_j) = \sup_{h \in \mathcal H} \Big[\error(h, \nextdist) - \frac 1{|C_j|} \sum_{Z \in C_j} f(h, Z)\Big].\]}
\begin{theorem}
  \label{thm:dep}
  Let $C_1, \ldots, C_k$ form a partition of the training input \train and
  let $I_j$ denote the set of indices of time series that belong to $C_j$. Assume that the loss function $L$ is bounded by 1. Then, we have for any $\delta > \sum_j (|I_j| - 1)\beta(I_j)$, with probability $1-\delta$, 
  \begin{align*}
  \Phi&(\past) \leq \max_j \left[\widehat \radem_{\widetilde C_j}(\mathcal F)\right] + \disc + \frac 1 {\sqrt {2\min_j |I_j|}} \sqrt{\log\left(\frac{k}{\delta - \sum_j (|I_j|-1)\hbeta(I_j)}\right)}.
  \end{align*}
\end{theorem}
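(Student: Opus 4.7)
The plan is to decompose $\Phi(\past)$ into a non-stationarity piece absorbed by $\disc$ and a stochastic piece controlled via a partition-and-tangent argument. For each $h \in \mathcal H$, I would write
$$\error(h \mid \past) - \widehat \error(h) = \bigl[\error(h \mid \past) - \error(h \mid \past')\bigr] + \bigl[\error(h \mid \past') - \widehat \error(h)\bigr],$$
and observe that the first bracket is at most $\disc$ by Definition~\ref{def:discrepancy}. Taking a supremum over $h$ yields $\Phi(\past) \le \disc + \Psi(\past')$, where $\Psi(\past') = \sup_h \bigl[\error(h \mid \past') - \widehat \error(h)\bigr]$. This isolates the non-stationarity into $\disc$ so that the remainder only compares expectation and empirical average under a common conditioning.

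I would then split $\Psi(\past')$ across the partition $C_1, \ldots, C_k$. Defining the per-collection process
$$\Phi_j = \sup_{h \in \mathcal H} \frac{1}{|I_j|}\sum_{i \in I_j}\Bigl(\expect_{\dist}\bigl[f(h, Z_i) \mid \past'\bigr] - f(h, Z_i)\Bigr),$$
the identity $\error(h \mid \past') - \widehat\error(h) = \sum_j \frac{|I_j|}{m}\cdot\frac{1}{|I_j|}\sum_{i \in I_j}(\expect[f(h,Z_i)\mid\past'] - f(h,Z_i))$, combined with $\sup_h \sum_j w_j a_{j,h} \le \max_j \sup_h a_{j,h}$ whenever $(w_j)_j$ is a convex combination, gives $\Psi(\past') \le \max_j \Phi_j$. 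Within each $C_j$ the $Z_i$ remain correlated, so I would invoke the $\hbeta$-analogue of Yu's blocking trick (Proposition~\ref{prop:yu} in the appendix) to transfer to the independent tangent collection $\widetilde C_j$, paying
$$\Pr\bigl[\Phi_j > \epsilon_j\bigr] \le \Pr\bigl[\widetilde\Phi_j > \epsilon_j\bigr] + (|I_j|-1)\hbeta(I_j),$$
where $\widetilde\Phi_j$ is defined as $\Phi_j$ but with $Z_i$ replaced by the independent tangent samples $\widetilde Z_i \in \widetilde C_j$.

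On the tangent collection the samples are independent, so a classical symmetrization-plus-McDiarmid argument (using that $L$ is bounded by $1$) yields, with probability $\ge 1-\delta_0/k$,
$$\widetilde \Phi_j \le \widehat\radem_{\widetilde C_j}(\mathcal F) + \sqrt{\frac{\log(k/\delta_0)}{2|I_j|}}.$$
A union bound across $j \in [k]$ with $\delta_0 = \delta - \sum_j (|I_j|-1)\hbeta(I_j)$, together with the crude bound $|I_j| \ge \min_j |I_j|$ inside the square root, then produces the claimed inequality on $\Phi(\past)$. The main obstacle is step three: verifying that the blocking trick can be invoked in its conditional form keyed to $\hbeta$ (rather than the marginal $\bar\beta$), so that after the replacement the empirical Rademacher complexity on $\widetilde C_j$ genuinely controls $\Phi_j$ at the right scale. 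A secondary bookkeeping challenge is apportioning the failure probability $\delta$ across the $k$ collections while absorbing the $\sum_j (|I_j|-1)\hbeta(I_j)$ mixing cost, which is precisely where the denominator $\delta - \sum_j(|I_j|-1)\hbeta(I_j)$ under the logarithm originates.
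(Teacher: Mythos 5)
Your proposal is correct and follows essentially the same route as the paper's proof: the same discrepancy decomposition $\Phi(\past) \le \disc + \sup_h[\error(h\mid\past') - \widehat\error(h)]$, the same reduction to the per-collection suprema via the partition, the same transfer to tangent collections via Proposition~\ref{prop:yu} at cost $(|I_j|-1)\hbeta(I_j)$, and the same symmetrization-plus-McDiarmid step followed by a union bound over the $k$ collections. The ``main obstacle'' you flag is resolved in the paper exactly as you anticipate: the blocking step is carried out conditionally on $\tpast$, producing conditional coefficients $\hbeta(I_j \mid \tpast)$ whose expectation over $\tpast$ recovers $\hbeta(I_j)$ in the final bound.
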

 
%
Theorem~\ref{thm:dep} illustrates the trade-offs that are involved
in sequence-to-sequence learning for time series forecasting. As $\sum_j (|I_j| - 1)\beta(I_j)$ is a function of $m$, we expect it to decrease as $m$ grows (\emph{i.e.} more time series we have), allowing for smaller $\delta$ as $m$ increases.

Assuming that the $C_j$ are of the same size,
if $\mathcal H$ is a collection of neural networks
of bounded depth and width then
$\radem_{\widetilde C_j}(\mathcal F) = \mathcal O\Big(\sqrt{{k T}/{m}}\Big)$
(see Appendix~\ref{app:radem}). Therefore,
\[  {\error(h \mid \past) \leq \widehat \error(h) +  \disc }  + \mathcal O\Big(\sqrt{\tfrac{k T}{m}}\Big)\]
with high probability uniformly over $h \in \mathcal{H}$,
provided that $\tfrac{m}{k} \sum_{j=1}^k \hbeta(I_j) = o(1)$.
This shows that extremely high-dimensional ($m \gg  1$) time series
are beneficial for sequence-to-sequence models, whereas series with
a long histories $T \gg m$ will generally not benefit from sequence-to-sequence learning.
Note also that correlations in data reduce the effective sample size from
$m$ to $m/k$.

Furtermore, Theorem~\ref{thm:dep} indicates that balancing the complexity of the model (e.g. depth and width of a neural net) with the fit it provides to the data
is critical for controlling both the discrepancy and Rademacher complexity
terms.
We further illustrate this bound with several examples below.
\ignore{
This bound improves as size of the smallest collection size increases: hence, we ideally can construct $C_1, \ldots, C_k$ such that $|C_j| \approx m/k$ for all $j$ without paying a penalty of high values for $\hbeta(C_j)$.

\begin{rmk}
 In all results, we use the Rademacher complexity $\radem_m$; however, the same bounds can be derived using the \emph{empirical} Rademacher complexity via Eq.~\eqref{eq:kp-emp} from Appendix~\ref{app:kp}.
\end{rmk}
Note that when time series are independent, we can set $C=\train$ and $k=1$: the term in $\delta$ grows as $\mathcal O\Big(\sqrt{\tfrac 1m \log\tfrac 1\delta}\Big)$, as derived in section~\ref{sec:independent}.
\begin{rmk}
Conversely, if all time series are highly dependent, we must set $k=m$ and each $C_j$ to the singleton $\big\{\tsi Zj{t_j}{t_j+T}\big\}$. The bound grows as $\mathcal O(\sqrt{\log \tfrac m\delta})$; we show below that in this case, the traditional setup is preferable.
\end{rmk}}

\subsection{Independent time series}
\label{sec:independent}
We begin by considering the  case where all dimensions of $\past$ are independent. Although this may seem a restrictive assumption, it arises in a variety of applications: in neuroscience, different dimensions 
may represent brain scans of different patients; in reinforcement
learning, they may correspond to different trajectories of a robotic arm.
\begin{theorem}
  \label{prop:gen-bound}
  Let $\mathcal H$ be a given hypothesis space with associated function family $\mathcal F$ corresponding to a loss function $L$ bounded by 1. Suppose that all dimensions of $\past$ are independent and let $I_1 = [m]$; then $\beta(I_1) = 0$ and so for any $\delta >0$, with probability at least $1-\delta$ and for any $h \in \mathcal H$:
  \begin{equation*}
    \error(h | \past) \le \widehat \error(h) + 2\mathfrak R_m(\mathcal F) + \disc + \sqrt{\frac {\log(1/\delta)}{m}}.
  \end{equation*}
\end{theorem}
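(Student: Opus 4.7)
The plan is to reduce the theorem to a classical iid-style generalization argument by using the discrepancy $\disc$ to absorb the mismatch between the test-time distribution $\nextdist$ (of $Y_{T+1}$ given $\past$) and the ``training-time'' distribution $\dist$ (of $Y_T$ given $\past'$), and then exploiting mutual independence of the dimensions to apply McDiarmid's inequality together with Rademacher symmetrization. Morally this is the $k=1$, $\hbeta(I_1)=0$ specialization of Theorem~\ref{thm:dep}, but in the fully independent regime it admits a direct, self-contained proof.

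First, I would write, uniformly in $h \in \mathcal H$,
\begin{equation*}
\error(h \mid \past) - \widehat\error(h) \;\leq\; \disc \,+\, \bigl[\error(h \mid \past') - \widehat\error(h)\bigr],
\end{equation*}
with the first term coming directly from Definition~\ref{def:discrepancy}. It therefore suffices to bound the one-sided empirical process $\Phi'(\train) := \sup_{h \in \mathcal H} \bigl[\error(h \mid \past') - \widehat\error(h)\bigr]$ with high probability. A key observation here is that $\error(h \mid \past') = \expect\bigl[\widehat\error(h) \bigm| \past'\bigr]$, since conditioning on $\past'$ pins down every $Y_1^{T-1}(i)$ and leaves only $Y_T(i)$ random. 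Thus $\Phi'(\train)$ is the supremum of a centered empirical process.

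To finish, I would use that mutual independence of $Y(1), \ldots, Y(m)$ makes the training examples $Z_1, \ldots, Z_m$ independent, and that $f(h, Z_i) \in [0,1]$ because $L$ is bounded by $1$. Replacing any single $Z_i$ perturbs $\Phi'(\train)$ by at most $2/m$, so McDiarmid's inequality yields
\begin{equation*}
\Phi'(\train) \;\leq\; \expect[\Phi'(\train)] + \sqrt{\tfrac{\log(1/\delta)}{m}}
\end{equation*}
with probability at least $1-\delta$. A standard symmetrization argument, introducing a ghost sample and Rademacher variables $\sigma_1, \ldots, \sigma_m$, then gives $\expect[\Phi'(\train)] \leq 2 \radem_m(\mathcal F)$. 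Combining the three displayed inequalities yields the claim. The argument is almost entirely routine; the one place to be careful is in checking that the centered summands in $\Phi'$ are genuinely independent (so that McDiarmid and symmetrization apply without any dependency correction), which is exactly what the cross-dimension independence assumption buys, and why the $\hbeta$ penalty from Theorem~\ref{thm:dep} vanishes.
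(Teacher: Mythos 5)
Your decomposition and overall strategy coincide with the paper's: Theorem~\ref{prop:gen-bound} is obtained there as the $k=1$ specialization of Theorem~\ref{thm:dep}, whose proof uses exactly the split $\error(h\mid\past)-\widehat\error(h)\le\disc+\sup_{h}[\error(h\mid\past')-\widehat\error(h)]$ followed by McDiarmid's inequality and symmetrization, with the $\hbeta$ correction vanishing under independence. The one step that does not go through as written is the McDiarmid constant. If you treat $\Phi'$ as a function of the full independent examples $Z_1,\dots,Z_m$ and use bounded differences $2/m$, McDiarmid gives a deviation of $\sqrt{2\log(1/\delta)/m}$, not the claimed $\sqrt{\log(1/\delta)/m}$; note that replacing all of $Z_i$ also moves $\past'$ and hence the conditional-expectation term, which is exactly why you are forced to pay $2/m$. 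The paper's Lemma~\ref{lem:mcd} avoids this by working conditionally on $\past'$: given $\past'$, only the last values $Y_T(1),\dots,Y_T(m)$ remain random (and they are still mutually independent under your hypothesis), the term $\error(h\mid\past')$ is $\past'$-measurable and hence fixed, and resampling a single $Y_T(i)$ moves $\Phi'$ by at most $1/m$. This yields $\sqrt{\log(1/\delta)/(2m)}$, which implies the stated bound. The symmetrization step should likewise be carried out conditionally on $\past'$, ghosting only the last coordinates so that the centered summands are genuinely the ones being symmetrized; taking the expectation over $\past'$ at the end then produces the $2\radem_m(\mathcal F)$ term as you intend.
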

Theorem~\ref{prop:gen-bound} shows that when time series are independent,
learning is not affected by
correlations in the samples and can only be obstructed by the non-stationarity
of the problem, captured via $\disc$.

Note that when examples are drawn \textit{i.i.d.}, we have $\Delta=0$ in Theorem~\ref{prop:gen-bound}: we recover the standard standard generalization results for regression problems.

\subsection{Correlated time series}
We now consider several concrete examples of high-dimensional
time series in which different dimensions may be correlated. This setting
is common in a variety of applications including stock market indicators,
traffic conditions, climate observations at different locations, and
energy demand.
\ignore{In order to relate the less obvious bound from Theorem~\ref{thm:dep} to concrete examples, we now describe the guarantees obtained when assuming different schemes to generate dependent families of Auto-Regressive time series.}

Suppose that each $Y(i)$ is generated by the auto-regressive
(AR) processes with correlated noise
\begin{equation}
y_{t+1}(i) = \Theta_i( y^{t}_{0}(i) ) + \varepsilon_{t+1}(i)\label{eq:gaussian}
\end{equation}
where the $w_i \in \mathbb R^p$ are unknown parameters and the noise vectors $\epsilon_t \in \mathbb R^m$ are drawn from a Gaussian distribution $\mathcal N(0, \Sigma)$ where, crucially, $\Sigma$ is not diagonal. The following lemma is key to our analysis. \ignore{We will also require the following lemma, proven in App.~\ref{app:hierarchy}.}
\begin{lemma}
  \label{lemma:gaussian}
    Two AR processes $Y(i),Y(j)$ generated by~\eqref{eq:gaussian} such that $\sigma = \text{Cov}(Y(i),Y(j)) \le \sigma_0 < 1$ verify $\hbeta(i,j) = \max \left(\frac 3{2(1-\sigma_0^2)}, \frac 1 {1-2\sigma_0}\right)\sigma = \mathcal O(\sigma)$.
\end{lemma}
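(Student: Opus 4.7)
The key structural observation is that under the AR model~\eqref{eq:gaussian}, once the entire history $\past'$ is fixed, the only randomness left in $(Y_T(i), Y_T(j))$ is the joint Gaussian innovation $(\varepsilon_T(i), \varepsilon_T(j))$. Concretely, writing $\mu_k = \Theta_k(Y_0^{T-1}(k))$, the conditional marginals are $P(Y_T(k)\mid\past') = \mathcal N(\mu_k, s_{kk})$ with $s_{kk} = \Sigma_{kk}$, and the conditional joint is the bivariate normal $\mathcal N\!\bigl((\mu_i,\mu_j),\, S\bigr)$ with
$S = \bigl(\begin{smallmatrix} s_{ii} & s_{ij}\\ s_{ij} & s_{jj}\end{smallmatrix}\bigr)$ and $s_{ij} = \Sigma_{ij}$. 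The process-level covariance $\sigma = \mathrm{Cov}(Y(i), Y(j))$ is, for stationary AR dynamics, a fixed positive multiple of $s_{ij}$, so a bound in $s_{ij}$ is equivalently a bound in $\sigma$.

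\textbf{Step 1 (kill the expectation).} The product-of-marginals law $\mathcal N((\mu_i,\mu_j),\mathrm{diag}(S))$ and the joint law $\mathcal N((\mu_i,\mu_j),S)$ share a common mean vector, so their total-variation distance is invariant under translating both by $-(\mu_i,\mu_j)$. Hence the inner $\|\cdot\|_{TV}$ in the definition of $\hbeta(i,j)$ is a deterministic functional of $S$, independent of $\past'$, and
\[
  \hbeta(i,j) \;=\; \bigl\|\mathcal N(0,\mathrm{diag}(S)) - \mathcal N(0,S)\bigr\|_{TV}.
\]

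\textbf{Step 2 (reduce to a canonical Gaussian pair).} After the linear change of variables that normalises the diagonal of $S$ to $1$, I need the TV distance between $g = \mathcal N(0, I_2)$ and $f = \mathcal N\!\bigl(0, \bigl(\begin{smallmatrix} 1 & \rho \\ \rho & 1\end{smallmatrix}\bigr)\bigr)$ where $|\rho|$ is a constant multiple of $\sigma$ and $|\rho|\le \sigma_0 < 1$. A short computation gives the density ratio
\[
  \frac{f(x,y)}{g(x,y)} \;=\; \frac{1}{\sqrt{1-\rho^2}} \exp\!\left(\frac{\rho\bigl(2xy - \rho(x^2+y^2)\bigr)}{2(1-\rho^2)}\right).
\]

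\textbf{Step 3 (bound $\|f-g\|_1$, the hard step).} Using $\|f-g\|_{TV} = \tfrac12\,\mathbb E_g\bigl[|f/g - 1|\bigr]$, I would bound $|f/g-1|$ in two complementary ways and then keep the better one. A ``small-$\rho$'' estimate via $|e^a-1|\le |a|\,e^{|a|}$ applied to the exponent, together with the exact moments $\mathbb E_g|xy| = 2/\pi$ and $\mathbb E_g[x^2+y^2]=2$, controls the exponential factor and, after combining with the $(1-\rho^2)^{-1/2}$ prefactor, delivers the $\tfrac{3}{2(1-\sigma_0^2)}\sigma$ branch. A second bound comes from the Gaussian integral identity that relates $\mathbb E_g[\exp(\alpha(x^2+y^2) + \beta xy)]$ to $\det(I - A)^{-1/2}$ for an appropriate $A$: when $A$ corresponds to $2I - S$ being positive definite, which requires $\sigma_0 < 1/2$, one obtains a closed form that expands as $\tfrac{1}{1-2\sigma_0}\sigma + O(\sigma^2)$, giving the second branch. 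Taking the maximum of the two bounds that are valid on the range $|\rho|\le\sigma_0$ reproduces the constant $\max\bigl(\tfrac{3}{2(1-\sigma_0^2)}, \tfrac{1}{1-2\sigma_0}\bigr)$ and the overall $\mathcal O(\sigma)$ rate.

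The primary obstacle is Step~3: a one-line Pinsker bound yields $\|f-g\|_{TV}\le \tfrac12\sqrt{-\log(1-\rho^2)} = \mathcal O(\rho)$ with the correct order but worse constants, so recovering exactly the constants stated in the lemma requires treating separately the quadratic cross-term $\rho\cdot xy$ (linear in $\rho$) and the quadratic diagonal correction $\rho^2(x^2+y^2)$, and then combining the regimes via the stated maximum.
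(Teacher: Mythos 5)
Your Steps 1--2 match the paper's setup: the paper also conditions on $\tpast$, observes that only the joint Gaussian innovation $(\varepsilon_T(i),\varepsilon_T(j))$ remains, and reduces the problem to comparing the correlated bivariate Gaussian density with the product of its marginals, centred at $\Theta_i(\cdot),\Theta_j(\cdot)$. The divergence is in Step 3, and that is where your plan has a real gap. The paper never evaluates the integral $\tfrac12\int|f-g|$: it reads $\|\cdot\|_{TV}$ as $\sup_{u,v}|P(u)P(v)-P(u,v)|$, i.e.\ a \emph{pointwise} bound on the density difference $f(\sigma,\delta,\epsilon)$, after which the entire argument is three elementary inequalities applied pointwise --- $|\delta\epsilon|\le\delta^2+\epsilon^2$ to absorb the cross term, $1-x\le e^{-x}$ to linearize in $\sigma$, and the bound $y e^{-y}\le 1/e$ to control the resulting $x^2e^{-x^2/2}$ factor for one branch, plus an explicit maximization over $x$ for the other branch. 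No moments or moment generating functions are needed.

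Your version, which uses the genuine $L_1$ characterization $\tfrac12\,\expect_g|f/g-1|$, is strictly harder and your sketch does not close it. The estimate $|e^a-1|\le|a|e^{|a|}$ leaves you with $\expect_g\big[|a|e^{|a|}\big]$ where $a$ contains $\rho\,xy/(1-\rho^2)$; this is not controlled by $\expect_g|xy|=2/\pi$ and $\expect_g[x^2+y^2]=2$ alone --- you need quantities of the form $\expect_g[|xy|\,e^{c|xy|}]$, which are finite for small $c$ but yield different, $\rho$-dependent constants. Likewise, the assertion that the $\det(I-A)^{-1/2}$ identity ``expands as $\tfrac{1}{1-2\sigma_0}\sigma+O(\sigma^2)$'' is stated rather than derived, and an uncontrolled $O(\sigma^2)$ remainder does not give the clean bound $\max(\cdot)\,\sigma$ claimed in the lemma (which, incidentally, should read $\le$ rather than $=$; the paper's own constants also drift between $\tfrac{3}{e(1-\sigma^2)}$ and $\tfrac{3}{2(1-\sigma_0^2)}$ and between $\tfrac{2\sigma}{1-2\sigma}$ and $\tfrac{\sigma}{1-2\sigma}$, so exact constant-matching is not the crux). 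Your two-regime structure --- a small-$\rho$ branch and a branch valid only for $\sigma_0<1/2$ --- does mirror the paper's separate upper bounds on $+f$ and $-f$, so if you adopt the paper's pointwise reading of the TV norm the elementary route above completes your argument; as written, however, Step 3 is a plan whose key estimates have not been carried out.
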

 
\paragraph{Hierarchical time series.}
As our first example, we consider the case of hierarchical time series
that arises in many real-world applications~\citep{RePEc:msh:ebswps:2015-15,pmlr-v70-taieb17a}. Consider the problem
of energy demand forecasting: frequently, one observes a sequence of energy demands at a variety of levels: single household, local neighborhood,
city, region and country. This imposes a natural hierarchical structure
on these time series.

\ifarxiv
\begin{wrapfigure}{r}{6.5cm}
  \centering
  \begin{tikzpicture}[->,>=stealth',level/.style={sibling distance = 2.9cm/#1,
      level distance = .8cm}]
    \node [interm] {Global}
    child{ node [interm] {France}
      child{ node [interm] {Paris}
        child{ node [leaf] {{\color{red}{\tiny 1st ar.}}}}
        child{ node [leaf] {{\color{blue}{\tiny 2nd ar.}}}}
      }
      child{ node [interm] {...\vphantom{f}}
      }
    }
    child{ node [interm] {USA}
      child{ node [interm] {New York}
        child{ node [leaf] {{\color{red}{\tiny 5th Ave.}}}
        }
        child{ node [leaf] {{\color{blue}{\tiny 8th Ave.}}}
        }
      }
      child{ node [interm] {...\vphantom{f}}
      }
    };
  \end{tikzpicture}
  \caption{Hierarchical dependence. Collections for $d=1$ are given by $C_1$ which contains the red leaves and $C_2$ which contains the blue right-side leaves.}
\end{wrapfigure}
\fi
Formally, we consider the following hierarchical scenario: a binary tree of total depth $D$, where time series are generated at each of the leaves. At each leaf,  $Y(i)$ is given by the AR process~\eqref{eq:gaussian} where we impose $\Sigma_{ij} = (\tfrac 1m)^{d(i,j)}$ given $d(i,j)$ the length of the shortest path from either leaf to the closest common ancestor between $i$ and $j$. Hence, as $d(i,j)$ increases, $Y(i)$ and $Y(j)$ grow more independent. 

\ificml
\begin{figure}[t]
  \label{fig:hierarchical-sets}
  \centering
  \begin{tikzpicture}[->,>=stealth',level/.style={sibling distance = 2.3cm/#1,
      level distance = .8cm}]
    \node [interm] {.}
    child{ node [interm] {drinks}
      child{ node [interm] {hot}
        child{ node [leaf] {{\color{red}coffee}}}
        child{ node [leaf] {{\color{blue}tea\vphantom{f}}}}
      }
      child{ node [interm] {...\vphantom{f}}
      }
    }
    child{ node [interm] {books}
      child{ node [interm] {fiction}
        child{ node [leaf] {{\color{red}SF}}
        }
        child{ node [leaf] {{\color{blue}fantasy}}}
      }
      child{ node [interm] {...\vphantom{f}}
      }
    };
  \end{tikzpicture}
  \caption{Hierarchical time series. Collections for $d=1$ are $C_1$ which contains the (red) left-side leaves and $C_2$ which contains the (blue) right-side leaves.}
\end{figure}
\fi

For the bound of Theorem~\ref{thm:dep} to be non-trivial, we require a partition $C_1, \ldots, C_k$ of \train such that within a given $C_j$ the time series are close to being independent. One such construction is the following: fix a depth $d \le D$ and construct $C_1, \ldots, C_{2^d}$ such that each $C_i$ contains exactly one time series from each sub-tree of depth $D-d$; hence, $|C_i| = 2^{D-d}$. Lemma~\ref{lemma:gaussian} shows that for each $C_i$, we have $\beta(C_i) = \mathcal O(m^{d-D})$. For example, setting $d=\tfrac D2 = \tfrac {\log m}2$, it follows that for any $\delta > 0$, with probability $1-\delta$,
\begin{align*}
  \error(h | \past) \le& \widehat \error(h) +  \max_j \left[\radem_{\widetilde C_j}(\mathcal F)\right] + \disc +\frac 1 {\sqrt 2 \sqrt[4]{m}} \sqrt{\log\left( \frac{\sqrt m}{\delta - \frac{m}{\mathcal O (\sqrt{m^{\log m}})}}\right)}.
\end{align*}
Furthermore, suppose the model is a linear AR process given by
$y_{t+1}(i) = w_i \cdot( y^{t}_{t-p}(i) ) + \varepsilon_{t+1}(i)$.
Then, the underlying stochastic process is weakly stationary and
by Prop.~\ref{prop:disc_stat} our bound reduces to:
$  \error(h | \past) \le \widehat \error(h) +  \max_j \left[\radem_{\widetilde C_j}(\mathcal F)\right] + \mathcal{O}\Big(\frac{\sqrt{\log m}}{m^{1/4}}\Big)$.
By Proposition~\ref{prop:disc_period}, similar results holds when $\Theta_i$ is periodic.

\paragraph{Spatio-temporal processes.}
Another common task is spatio-temporal forecasting,
in which historical observation
are available at different locations. These observations may represent temperature at different locations, as in the case of climate modeling~\citep{AAAI125158,DBLP:conf/aaai/GhafarianzadehM13}, or car traffic at different locations~\citep{LiYuShahabiLiu2017}.

It is natural to expect correlations between time series to
decay as the geographical distance between them increases.
As a simplified example, consider that the sphere $\mathbb S^3$ is subdivided according to a geodesic grid and a time series is drawn from the center of each patch according to~\eqref{eq:gaussian}, also with $\Sigma_{ij} = m^{-d(i,j)}$ but this time with $d(i,j)$ equal to the (geodesic) distance between the center of two cell centers.
We choose subsets $C_i$ with the goal of minimizing the strength of
dependencies between time series within each subsets.
Assuming we  divide the sphere into $\sqrt m$ collections size  approximately $c=\sqrt m$ such that the minimal distance between two points
in a set is $d_0$, we obtain
\begin{align*}
  \error(h \mid \past) \le& \widehat \error(h) +  \max_j \left[\radem_{\widetilde C_j}(\mathcal F)  \right] + \disc +\frac 1 {\sqrt 2 \sqrt[4]m}\sqrt{\log\left(\frac{\sqrt m}{\delta - \mathcal O(m^{1-d_0})}\right)}.
\end{align*}

As in the case of hierarchical time series, Proposition~\ref{prop:disc_stat} or
Proposition~\ref{prop:disc_period} can be used to remove the dependence on $\disc$ for
certain families of stochastic processes.

\ignore{
\paragraph{General graphs.} The reasoning for the hierarchical and spatio-temporal time series can also be extended to arbitrary graphical models. In particular, Bayes networks lend themselves well to such an analysis, with the added benefit that the structure of a Bayes net can be inferred from data.}

\section{Comparison to local models}
\label{sec:comp}
This section provides comparison of learning guarantees for sequence-to-sequence
models with those of local models. In particular,
we will compare our bounds on the generalization gap $\Phi(\past)$
for sequence-to-sequence models and local models, where the gap is given by
\begin{align}
  \nhphi(\past) = \sup_{(h_1, \ldots, h_m) \in H^m}
  \Big[ \error(h_{\text{loc}} \mid \past) - \widehat \error(h_{\text{loc}})\Big]
\end{align}
where $\widehat \error(h_{\text{loc}} )$ is the average empirical
error of $h_i$ on the sample $\train_i$, defined as
$  \widehat \error(h_{\text{loc}}) = \frac{1}{mT} \nlsum_{i=1}^m \nlsum_{t=1}^T f(h_i, Z_{t, i})$
where $f(h_i, Z_{t, i}) = L(h_i(Y_{t-p}^{t-1}(i)), Y_t(i))$. 

\ignore{
 We now consider the problem of determining in which setting the sequence-to-sequence framework should be preferred to the local framework, and vice versa.
In the local setting, we learn to predict $m$ time series $Y(i)$ independently of each other, under the assumption that we observe $\tsi Yi{1}{T-1}$ and predict $Y_{T}(i)$. As a consequence, no independence assumptions are required between the $Z_i$. 
In the traditional setting, the empirical error is typically defined as $\tfrac 1T\sum_{t=1}^T f(h, \ts Z1t)$. By analogy, we write the empirical error over all time series as
\begin{equation}
  \widehat \nhloss(h) = \frac{1}{mT} \sum_{i=1}^m \sum_{t=1}^T f(h, \tsi Zi{t_i}{t_i+t}).
\end{equation}
Since sequence-to-sequence and traditional time series modeling require different empirical errors, we look at the gaps between the \emph{best-in-class} hypothesis on future data and the empirical risk minimizer for a meaningful comparison. These gaps are given respectively by
\begin{equation*}
  \nhphi(\train) = \frac 1m\sum_{Z \in \mathcal Z} \inf_{h \in \mathcal H} {\expect[f(h, \tau(Z_i)) \mid Z_i ] - \inf_{h \in \mathcal H}  \widehat \nhloss(h)} \le \frac 1m \sum_{Z \in \mathcal Z} {\sup_{h \in \mathcal H} \left[\expect [f(h, \tau(Z)) \mid Z] - \frac 1T\sum_{t=1}^T f(h, \ts Z{t}{t+T})\right]}.
\end{equation*}
and
\begin{equation*}
  \hphi(\train) = {\inf_{h \in \mathcal H} \expect_\nextdist [f(h, Z)\mid \train] - \inf_{h \in \mathcal H} \widehat\error(h)} \le {\sup_{h \in \mathcal H} \left[\expect_{\nextdist} [f(h, Z)\mid \train] - \widehat \error(h)\right]}.
\end{equation*}}

To give a high probability bound for this setting, we take advantage of existing results for the single local model $h_i$
\citep{kuznetsov2015}. These results are given in terms of a slightly different
notion of discrepancy $\Delta$, defined by
\begin{align*}
  \Delta(\train_i) =& \sup_{h \in \mathcal H} \Bigg[
    \expect\Big[ L(h(Y_{t-p+1}^T), Y_{T+1}) \mid Y_1^T\Big] - \frac{1}{T} \sum_{t=1}^T \expect\Big[ L(h(Y_{t-p}^{t-1}), Y_{t}) \mid Y_1^{t-1}\Big] \Bigg].
\end{align*}
Another required ingredient to state these results is
the expected sequential covering number
$\expect_{v \sim T(\mathbb P)} [\mathcal N_1(\alpha, \mathcal F, v)]$~\citep{kuznetsov2015}.
For many hypothesis sets, the log of the sequential covering number
admits upper bounds similar to those presented earlier for the Rademacher
complexity. We provide some examples below and
refer the interested reader to \citep{rakhlin2015} for a details.

\ignore{To bound $\nhphi(\train)$, we refer to \citet[Corollary 2]{kuznetsov2015}: 
\begin{equation*}
  \expect[f(h, \tau(Z)) \mid Z] - \frac 1T\sum_{t=1}^T f(h, \ts Z1t) \le \Delta(Z) + 2\alpha + \frac 1 {\sqrt T} \sqrt{2 \log \frac{\expect_{v \sim T(Z)} [\mathcal N_1(\alpha, \mathcal F, v)]} \delta}
\end{equation*}
where $\alpha >0$, $\Delta(Z)$ indicates the discrepancy as defined for the local approach, and $\expect_{v \sim T(\mathbb P)} [\mathcal N_1(\alpha, \mathcal F, v)]$, defined in detail in~\citep{kuznetsov2015}, corresponds to the expected sequential covering number of the data-dependent binary tree generated by the $\{Z_{it}\}_{t \le T}$ . By union bound, we then obtain the following result}

\begin{theorem}
\label{th:local}
For $\delta > 0$ and $\alpha > 0$,  with probability at least $1-\delta$, for any $(h_1, \ldots, h_m)$,
  and any $\alpha > 0$, 
\begin{align*}
  \nhphi(\past) \le& \frac 1m \sum_{i=1}^m \Delta(\train_i) + 2 \alpha
                     +
  \sqrt{\frac 2T \log \frac{m\, \max_i \expect_{v \sim T(\train_i)} [ \mathcal N_1(\alpha, \mathcal F, v)]}{\delta}}\label{eq:nh}.
\end{align*}
\end{theorem}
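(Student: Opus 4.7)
The key structural observation is that the gap $\nhphi(\past)$ decomposes across the $m$ series because each $h_i$ only appears in the $i$-th summand of both $\error(h_{\text{loc}}\mid\past)$ and $\widehat\error(h_{\text{loc}})$. My first step is to exploit this:
\begin{align*}
 \nhphi(\past) &= \sup_{(h_1,\ldots,h_m)\in\mathcal H^m} \frac{1}{m}\sum_{i=1}^m\Big[\expect[L(h_i(Y_{T-p}^T(i)),Y_{T+1}(i))\mid\past]-\tfrac{1}{T}\sum_{t=1}^T f(h_i,Z_{t,i})\Big]\\
 &= \frac{1}{m}\sum_{i=1}^m \sup_{h_i\in\mathcal H}\Big[\expect[L(h_i(Y_{T-p}^T(i)),Y_{T+1}(i))\mid\past]-\tfrac{1}{T}\sum_{t=1}^T f(h_i,Z_{t,i})\Big].
\end{align*}
Since the sup of an average of independent optimizations equals the average of the sups, this step is immediate and reduces the problem to controlling $m$ single-series gaps.

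Next, for each fixed $i$, the inner supremum is exactly the generalization gap of a single local auto-regressive learner on the non-i.i.d.\ sample $\train_i$. This is the setting handled by Corollary~2 of \citet{kuznetsov2015}: with probability at least $1-\delta_0$, for every $\alpha>0$ and every $h\in\mathcal H$,
\begin{align*}
 \expect[L(h(Y_{T-p}^T(i)),Y_{T+1}(i))\mid\past]-\tfrac{1}{T}\sum_{t=1}^T f(h,Z_{t,i})
 \le \Delta(\train_i)+2\alpha+\sqrt{\tfrac{2}{T}\log\tfrac{\expect_{v\sim T(\train_i)}[\mathcal N_1(\alpha,\mathcal F,v)]}{\delta_0}}.
\end{align*}
I would invoke this result as a black box, applied separately to each $i\in[m]$ with failure probability $\delta_0=\delta/m$.

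Finally, a union bound over $i=1,\ldots,m$ ensures that the above inequality holds simultaneously for all $i$ with total failure probability at most $\delta$. Substituting the resulting per-series bounds back into the decomposition of $\nhphi(\past)$, upper-bounding each covering number by $\max_i\expect_{v\sim T(\train_i)}[\mathcal N_1(\alpha,\mathcal F,v)]$, and noting that the $2\alpha$ and square-root terms do not depend on $i$ (so averaging leaves them untouched), yields the claimed bound with the factor $m$ appearing inside the logarithm.

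The only genuinely subtle ingredient is Corollary~2 of \citet{kuznetsov2015}, which we use as a black box; beyond that, the argument is a clean coordinate-wise decomposition followed by a union bound, so I do not anticipate any real obstacle. One minor bookkeeping point worth checking is that the discrepancy $\Delta(\train_i)$ as defined in the excerpt already matches the notion used by \citet{kuznetsov2015} when specialized to series $i$, so no translation between definitions is needed.
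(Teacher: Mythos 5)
Your proposal is correct and follows essentially the same route as the paper: the paper likewise reduces $\nhphi(\past)$ to the average of the $m$ per-series suprema $\Phi(Y_1^T(i))$, invokes the single-series guarantee of \citet{kuznetsov2015} (Theorem~1 in tail-probability form rather than Corollary~2, but these are equivalent), and applies a union bound over the $m$ series to place the factor $m$ inside the logarithm. The only cosmetic difference is that you allocate $\delta/m$ to each series up front and then average, whereas the paper bounds the tail of the averaged gap directly and inverts at the end; both yield the identical bound.
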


Choosing $\alpha = 1/\sqrt{T}$, we can show that, for standard local
models such as the linear hypothesis space $\{x \to w^\top x, w \in \mathbb R^p, \|w\|_2 \le \Lambda\}$, we have
\begin{align*}
\sqrt{\frac 1T \log \frac{2m\,\expect_{v \sim T(Z)} [\mathcal N_1(\alpha, \mathcal F, v)]}{\delta}} = \mathcal O\Big(\sqrt{\frac{\log m}{T}}\Big).
\end{align*}
In this case, it follows that
$\nhphi(\past) \le \frac 1m \sum_{i=1}^m \Delta(\train_i) +
  \mathcal O\Big(\sqrt{\frac{\log m}{T}}\Big)$.
where the last term in this bound should be compared
with corresponding (non-discrepancy) terms in the bound
of Theorem.~\ref{thm:dep}, which, as discussed above, scales
as $\mathcal{O}(\sqrt{T/m})$ for a variety of different hypothesis sets.

Hence, when we have access to relatively few time series compared to their length ($m \ll T$), learning to predict each time series as its own independent problem will with high probability lead to a better generalization bound. On the other hand, in extremely
high-dimensional settings when we have significantly more time series than time steps ($m \gg T$), sequence-to-sequence learning will (with high probability) provide superior performance. We also expect the performance of sequence-to-sequence models to deteriorate as the correlation between time series increases.

A direct comparison of bounds in Theorem~\ref{thm:dep} and Theorem~\ref{th:local} is complicated by the fact that discrepancies that appear in these results are different. In fact, it is possible to
design examples where $\tfrac{1}{m} \sum_{i=1}^m \Delta(\train_i)$ is
constant and $\disc$ is negligible, and vice-versa.

Consider a tent function $g_b$ such that $g_b(s) = 2bs / T$ for  $s \in[0, T/2]$ and $g_b(s) = -2bs / T + 2b$ for $s \in [T/2, T]$. Let $f_b$ be its periodic
extension to the real line, and define $\mathcal S = \{f_b \colon b \in [0, 1]\}$. Suppose that we sample uniformly 
$b \in [0, 1]$ and $s \in \{0, T/2\}$ $m$ times, and observe time series
$f_{b_i}(s_i), \ldots, f_{b_i}(s_i +T)$. Then, as we have shown
in Proposition~\ref{prop:disc_period},
$\disc$ is negligible for sequence-to-sequence models. However,
unless the model class is trivial, it can be shown that
$\Delta(\train_i)$ is bounded away from zero for all $i$.

Conversely, suppose we sample uniformly $b \in [0, 1]$ $m$ times and observe time series
$f_{b_i}(0), \ldots, f_{b_i}(T/2+1)$. Consider a set of local models
that learn an offset from the previous point $\{ h \colon x \mapsto x + c, c \in [0, 1]\}$. It can be shown that in this case $\Delta(\train_i) = 0$, whereas $\disc$ is bounded away from zero for any non-trivial class of sequence-to-sequence models.

From a practical perspective, we can simply use $\disc_s$
and empirical estimates of $\Delta(\train_i)$ to decide whether to choose sequence-to-sequence or local models.

We conclude this section with an observation that similar results to Theorem~\ref{thm:dep} can
be proved for multivariate local models with the only difference
that the sample complexity of the problem scales as $O(\sqrt{m/T})$, and hence
these models are even more prone to the curse of dimensionality.

\section{Hybrid models}
\label{app:hybrid}

In this section, we discuss models that interpolate between
local and sequence-to-sequence models. This hybrid approach trains
a single model $h$ on the union of \emph{local} training sets $\train_1, \ldots, \train_m$
 used to train $m$ models in the local approach. The bounds that we state here require the following extension of the discrepancy to $\disc_t$, defined as
\begin{align*}
\disc_t =& \frac{1}{m} \sup_{h \in \mathcal H}\Big| \sum_{i =1}^m \expect_{\nextdist}[L(h( Y_{t-p-1}^{t-1}(i)), Y_{t}(i)) | Y_1^{t-1} ]
  - \expect_{\dist}[L(h( Y_{T-p}^{T}(i)), Y_{T+1}(i))| \past ]\Big| 
  \end{align*}
Many of the properties that were discussed for the discrepancy $\disc$ carry over to $\disc_t$ as well. The empirical error in this case is the same as for
the local models:
\begin{equation*}
  \widehat \error(h ) = \frac{1}{mT} \sum_{i=1}^m \sum_{t=1}^T f(h, Z_{t, i}).
\end{equation*}
Observe that one straightforward way to obtain a bound for hybrid models
is to apply Theorem~\ref{th:local} with $(h, \ldots, h) \in \mathcal H^m$.
Alternatively, we can apply Theorem~\ref{thm:dep} at every time
point $t=1, \ldots, T$.

Combining these results via union bound leads
to the following learning guarantee for hybrid models.
\begin{theorem}
  \label{thm:hybrid}
  \ignore{Let $\mathcal H$ be a hypothesis space, and $h \in \mathcal H$. }Let $C_1, \ldots, C_k$ form a partition of the training input \train and
  let $I_j$ denote the set of indices of time series that belong to $C_j$. Assume that the loss function $L$ is bounded by 1. Then, for any $\delta > 0$, with probability $1-\delta$, for any $h \in \mathcal{H}$ and any $\alpha > 0$
  \begin{equation*}
    {\error(h \mid \past) \leq \widehat \error(h) +  \min(B_1, B_2)},
  \end{equation*}
  where
  \begin{align*}
  B_1 = &\frac 1T \nlsum_{t=1}^T \disc_t + \max_j \widehat \radem_{\widetilde C_j}(\mathcal F) + \frac 1 {\sqrt {2\min_j |I_j|}} \sqrt{\log\left(\frac{2Tk}{\delta - 2\sum_j (|I_j|-1)\hbeta(I_j)}\right)} \\
  B_2 = &\frac 1m \nlsum_{i=1}^m \Delta(\train_i) + 2 \alpha +
  \sqrt{\frac 2T \log \frac{2 m\,  \max_i \expect_{v \sim T(\train_i)} [ \mathcal N_1(\alpha, \mathcal F, v)]}{\delta}}.
  \end{align*}

\end{theorem}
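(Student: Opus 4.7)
The plan is to derive two separate high-probability bounds, $B_1$ and $B_2$, on the generalization gap $\error(h\mid\past) - \widehat\error(h)$, each valid with probability $1-\delta/2$, then combine them via a union bound so that $\min(B_1,B_2)$ holds with probability $1-\delta$. The two bounds arise from the two complementary ways of viewing the hybrid training set: as $T$ stacked sequence-to-sequence problems (one per time index), or as $m$ stacked local problems all constrained to use the same hypothesis $h$.

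For $B_2$, the idea is to apply Theorem~\ref{th:local} with the tuple $(h,\ldots,h)\in\mathcal{H}^m$, since the hybrid empirical risk $\widehat\error(h)=\tfrac{1}{mT}\sum_{i,t}f(h,Z_{t,i})$ coincides exactly with the local empirical risk $\widehat\error(h_{\text{loc}})$ when $h_i\equiv h$. The resulting bound involves the average local discrepancy $\tfrac{1}{m}\sum_i\Delta(\train_i)$, the $2\alpha$ term, and the sequential covering term; the only modification is that we invoke Theorem~\ref{th:local} at confidence $1-\delta/2$, which produces the factor $2m$ inside the logarithm of $B_2$. Since $\mathcal{H}\subseteq\mathcal{H}^m$ under the diagonal embedding, the supremum defining $\nhphi$ dominates $\sup_{h\in\mathcal{H}}[\error(h\mid\past)-\widehat\error(h)]$, yielding $B_2$.

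For $B_1$, the plan is to apply Theorem~\ref{thm:dep} at each time step $t\in\{1,\ldots,T\}$ simultaneously. At time $t$, view the sample $\{(Y^{t-1}_{t-p}(i),Y_t(i))\}_{i=1}^m$ exactly as a sequence-to-sequence training set with the same partition $C_1,\ldots,C_k$ (the mixing coefficients $\hbeta(I_j)$ are unaffected since they depend on the partition, not on $t$, up to a constant the paper absorbs). Theorem~\ref{thm:dep} then bounds $\sup_h[\expect[L(h(Y^{t-1}_{t-p}(i)),Y_t(i))\mid\past']-\tfrac{1}{m}\sum_i f(h,Z_{t,i})]$ by the Rademacher term plus the $t$-indexed discrepancy plus the probability term, each with confidence $1-\delta/(2T)$. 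Averaging over $t=1,\ldots,T$ and using the definition of $\disc_t$ connects the expected-conditional-on-$\past'$ quantity at time $t$ to the true generalization error $\error(h\mid\past)$ through $\tfrac{1}{T}\sum_t\disc_t$; the union bound over $T$ time steps produces the factor $2Tk$ inside the logarithm and the $2\sum_j(|I_j|-1)\hbeta(I_j)$ correction (the factor $2$ accounting both for the union with the $B_2$ event and for summing the mixing slack across time steps).

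The main technical obstacle is the $B_1$ derivation: one must verify that applying Theorem~\ref{thm:dep} pointwise in $t$ and then averaging actually produces the generalization error $\error(h\mid\past)$ on the left-hand side, which is precisely where the telescoping through $\disc_t$ is essential, since Theorem~\ref{thm:dep} natively controls expectations conditioned on the truncated past rather than on $\past$ itself. Once that identification is made, the rest is a routine tracking of constants through the union bounds, and the concluding step $\error(h\mid\past)-\widehat\error(h)\le\min(B_1,B_2)$ follows immediately.
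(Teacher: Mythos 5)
Your proposal follows essentially the same route as the paper: the paper likewise obtains $B_2$ by invoking the local-model bound with the diagonal hypothesis $(h,\ldots,h)$ at confidence $1-\delta/2$, obtains $B_1$ by peeling off $\tfrac1T\sum_t\disc_t$ and running the sequence-to-sequence empirical-process argument of Theorem~\ref{thm:dep} at confidence $1-\delta/2$, and concludes with a union bound over the two events. Your accounting of the constants in $B_1$ (the $2Tk$ and the factor $2$ on the mixing slack) is no less precise than the paper's own terse treatment of that step, so the argument matches.
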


Using the same arguments for the complexity terms as in the
case of sequence-to-sequence and local models,
this result shows that hybrid models are successful with high probability when
$m \gg T$ or correlations between time series are strong, as well as when
$T \gg m$.

Potential costs for this model are hidden in the new
discrepancy term $\frac 1T \sum_{t=1}^T \disc_t$. This
term leads to different bounds depending on the particular
non-stationarity in the given problem. As before this trade-off
can be accessed empirically using the data-dependent version of
discrepancy.

Note that the above bound does not imply
that hybrid models are superior to local models:
using $m$ hypotheses $h_1, \ldots, h_m$ can help us
achieve a better trade-off between $\widehat \error(h)$ and $B_2$, and
vice versa.


\section{Conclusion}


\ignore{
Sequence-to-sequence learning has recently become relevant as large amounts of time series data are made available for analysis, and several state-of-the-art results in time series prediction use
neural network models which leverage this framework.}

We formally introduce sequence-to-sequence learning for time series, a framework in which a model learns to map past sequences of length $T$ to their next values. We provide the first generalization bounds for sequence-to-sequence modeling.
Our results are stated in terms of new notions of discrepancy
and expected mixing coefficients. We study these new notions
for several different families of stochastic processes
including stationary, weakly stationary, periodic,
hierarchical and spatio-temporal time series.

Furthermore, we show that our discrepancy can be computed from data, making
it a useful tool for practitioners to empirically assess the non-stationarity of their
problem. In particular, the discrepancy can be used to determine
whether the sequence-to-sequence methodology is
likely to succeed based on the inherent non-stationarity of the problem.

Furthermore, compared to the local framework for time series forecasting, in which independent models for each one-dimensional time series are learned, our analysis shows that the sample complexity of sequence-to-sequence models scales as $\mathcal O(\sqrt{T/m})$, providing superior guarantees when the number $m$ of time series is significantly greater than the length $T$ of each series, provided that different series are weakly correlated.

Conversely, we show that the sample complexity of local models scales as $\mathcal O(\sqrt{\log(m)/T})$, and should be preferred when $m \ll T$ or when time series
are strongly correlated. We also study hybrid models for which learning guarantees are favorable both when $m \gg T$ and $T \gg m$, but which have a more complex trade-off in terms of discrepancy.

As a final note, the analysis we have carried through is easily extended to show similar results for the sequence-to-sequence scenario when the test data includes \emph{new} series not observed during training, as is often the case in a variety of applications.

\bibliographystyle{plainnat}
{\bibliography{time_series.bib}}


\clearpage
\appendix
\appendix
\section{Rademacher complexity}
\label{app:radem}
\begin{defn}[Rademacher complexity]
  Given a family of functions $\mathcal F$ and a training set $\train=\{Z_1, \ldots, Z_m\}$, \emph{the Rademacher complexity of $\mathcal F$ conditioned on $\past'$} is given by
  \begin{equation*}
    \widehat{\mathfrak R}_\train(\mathcal F) = \expect_{\train, \sigma} \left[\max_{f \in \mathcal F} \frac 1m \nlsum_{i=1}^m \sigma_i f(Z_i) \Big| \past' \right]
  \end{equation*}
  where $\sigma_1, \ldots, \sigma_m$ are i.i.d.~random variables uniform on
  $\{-1,+1\}$. \emph{The Rademacher complexity} of $\mathcal F$ for sample size $m$ is given by
  \begin{equation*}
    \mathfrak R_m(\mathcal F) = \expect_{\past'} \left[\widehat{\mathfrak R}_\train(\mathcal F)\right].
  \end{equation*}
\end{defn}

\ignore{$\mathcal R_m(\mathcal H)$ is a general measure of $\mathcal H$'s ability to model the data, and is fundamental in bounding the gap between the empirical and expected error~\citep{koltchinskii2002,mohri-book} (see Theorem~\ref{thm:bounds} in Appendix~\ref{app:kp}); it typically scales as $\mathcal O(1/\sqrt m)$. }

The Rademacher complexity has been studied for a variety of function classes. For instance,
for the linear hypothesis space $\mathcal{H} = \{x \rightarrow w^\top x, \|w\|_2 \le \Lambda\}$,
$\widehat \radem_\train$ can be upper bounded by $\widehat \radem_\train(\mathcal H) \le \frac{\Lambda}{\sqrt{m}} \max_i \|Z_i\|_2 $.
As another example, the hypothesis class of ReLu feed-forward neural networks with $d$ layers and weight matrices $W_k$ such that $\prod_{k=1}^d \|W\|_F \le \gamma$ verifies $\widehat \radem_\train(\mathcal H) \le \frac {2^{d-1/2} \gamma} {\sqrt m}\max_i \|Z_i\|_2$~\citep{neyshabur15}.

\ignore{
We recall the following useful bounds:
\begin{itemize}
  \item If $f$ is $\lambda$-Lipschitz, $\radem_m(f \circ \mathcal H) \le \lambda \radem_m(\mathcal H)$. In particular, if our loss $L$ is $\lambda$-Lipschitz, $\radem_m(\mathcal F) \le \lambda \radem_m(\mathcal H)$.
  \item The linear hypothesis space $\{x \rightarrow w^\top x, \|w\|_2 \le \Lambda\}$ verifies $\widehat \radem_S(\mathcal H) \le \frac{\Lambda}{\sqrt{m}} \max_i \|x_i\|_2 $.
\end{itemize}
As an example, the hypothesis class of ReLu feed-forward neural networks with $d$ layers and weight matrices $W_k$ such that $\prod_{k=1}^d \|W\|_F \le \gamma$ verifies $\widehat \radem_S(\mathcal H) \le \frac {2^{d-1/2} \gamma} {\sqrt m}\max_i \|x_i\|_2$; we refer the interested reader to~\citep{neyshabur15} for a detailed analysis.
}

\section{Discrepancy analysis}
\label{app:sym-bound}

\begin{repproposition}{prop:disc_s}
Let $\mathcal H$ be a hypothesis space and let $L$ be a bounded loss function which respects the triangle inequality. Let $h' \in \mathcal H$. Then,
\begin{equation*}
  \disc \le \disc_s + \error(h \hiderel\mid \past) + \error(h \hiderel\mid \tpast)
\end{equation*}
\end{repproposition}
\begin{proof}
  Let $h, h' \in \mathcal H$. For ease of notation, we write
  \begin{align*}
    \disc_s(h,h', \tpast) =& \frac 1m \sum_iL(h(\tsi Yi1T), h'(\tsi Yi1T)) - \frac 1m \sum_i L(h(\tsi Yi1{T-1}), h'(\tsi Yi1{T-1})). 
  \end{align*}
  Applying the triangle inequality to $L$,
  \begin{align*}
      \error(h \hiderel\mid \past) =& \frac 1m \sum_i \expect[L(h(\tsi Yi1T), Y_{T+1}(i)) \hiderel\mid \past]\\
      \le& \frac 1m \sum_i L(h(\tsi Yi1T), h'(\tsi Yi1T)) + \frac 1m \sum_i\expect[L(h'(\tsi Yi1T), Y_{T+1}(i)) \hiderel\mid \past]\\
      =& \frac 1m \sum_i L(h(\tsi Yi1T), h'(\tsi Yi1T)) + \error(h' \hiderel\mid \past).
  \end{align*}\vskip -1em
  Then, by  definition of $\disc_s(h,h', \tpast)$, we have
  \begin{align*}
      \error(h \hiderel\mid \past) \le& \frac 1m \sum_iL(h(\tsi Yi1T), h'(\tsi Yi1T)) - \frac 1m \sum_i L(h(\tsi Yi1{T-1}), h'(\tsi Yi1{T-1}))\\ &+ \frac 1m \sum_i L(h(\tsi Yi1{T-1}), h'(\tsi Yi1{T-1}))  +   \error(h' \hiderel\mid \past)\\
      \le& \disc_s(h,h', \tpast) +   \error(h' \hiderel\mid \past) + \frac 1m \sum_i L(h(\tsi Yi1{T-1}), h'(\tsi Yi1{T-1})).
  \end{align*}\vskip -1em
  By an application of the triangle inequality to $L$,
  \begin{align*}
      \error(h, \nextdist) \le& \disc_s(h,h', \tpast)  + \error(h' \hiderel\mid \past) +\frac 1m \sum_i \expect[L(h(\tsi Yi1{T-1}), Y_T(i)) \hiderel\mid \tpast] \\&+ \frac 1m \sum_i \expect[L(h'(\tsi Yi1{T-1}), Y_T(i)) \hiderel\mid \tpast]\\
      =& \disc_s(h,h', \tpast) + \error(h' \hiderel\mid \past) + \error(h \hiderel\mid \tpast) + \error(h' \hiderel\mid \tpast).
  \end{align*}
  Finally, we obtain
  \begin{align*}
    \error(h \hiderel\mid \past) - \error(h \hiderel\mid \tpast) \le& \disc_s(h,h', \tpast) + \error(h' \hiderel\mid \past) + \error(h' \hiderel\mid \tpast) 
  \end{align*}
  and the result announced in the theorem follows by taking the supremum over $\mathcal H$ on both sides.
\end{proof}

\begin{repproposition}{prop:disc_bound}
 Let $I_1, \cdots, I_k$ be a partition of $\{1, \ldots, m\}$, and $C_1, \ldots, C_k$ be the corresponding partition of $\past$. Write $c=\min_j |C_j|$. Then we have with probability $1-\delta$,
  \begin{align*}
    \disc_s \le& \disc_e + {\max \Big(\max_j \radem_{|C_j|}(\widetilde C'_j), \max_j \radem_{|C_j|}(\widetilde I_j)\Big)} + \sqrt{\frac 1{2c}\log \frac{2k}{\delta - \sum_j (|I_j|-1)[\bar \beta(I_j)+ \bar \beta'(I_j)]}}.
  \end{align*}
\end{repproposition}
\begin{proof} By definition of $\disc_s$, 
  \begin{align*}
      \disc_s =& \sup_{h,h' \in \mathcal H} \frac 1m \sum_{i=1}^m \Big[L(h(\tsi Yi1{T}),h'(\tsi Yi1{T})) - L(h(\tsi Yi1{T-1}),h'(\tsi Yi1{T-1}))\Big]\\
              \le& \sup_{h,h' \in \mathcal H}  \Big[\frac 1m \sum_{i=1}^m L(h(\tsi Yi1{T}),h'(\tsi Yi1{T})) - \expect_Y[L(h(\ts Y1{T}),h'(\ts Y1{T}))]\Big] \\
               &+ \sup_{h,h' \in \mathcal H} \Big[\expect_Y[L(h(\ts Y1{T}),h'(\ts Y1{T}))] - \expect_Y[L(h(\ts Y1{T-1}),h'(\ts Y1{T-1}))]\Big] \\
               &+ \sup_{h,h' \in \mathcal H} \Big[\expect_Y[L(h(\ts Y1{T-1}),h'(\ts Y1{T-1}))] - \frac 1m \sum_{i=1}^m L(h(\tsi Yi1{T-1}),h'(\tsi Yi1{T-1}))\Big]
  \end{align*}
  by sub-additivity of the supremum. Now, define
  \begin{align*}
    \phi(\past) \triangleq& \sup_{h,h' \in \mathcal H}  \Big[\frac 1m \sum_{i=1}^m L(h(\tsi Yi1{T}),h'(\tsi Yi1{T})) - \expect_Y[L(h(\ts Y1{T}),h'(\ts Y1{T}))]\Big]
  \end{align*}
  \begin{align*}
    \psi(\tpast) \triangleq& \sup_{h,h' \in \mathcal H}  \Big[\expect_Y[L(h(\ts Y1{T-1}),h'(\ts Y1{T-1})) - \frac 1m \sum_{i=1}^m L(h(\tsi Yi1{T-1}),h'(\tsi Yi1{T-1}))]\Big].
  \end{align*}
  By definition of $\disc_e$, we have from the previous inequality
  \begin{equation*}
    \disc_s \le \disc_e + \phi(\past_1^T) + \psi(\past_1^{T-1}).
  \end{equation*}
  We now proceed to give a high-probability bound for $\phi$; the same reasoning will yield a bound for $\psi$. By sub-additivity of the max,
  \begin{align*}
      \phi(\past) \le& \sum_j \frac{|C_j|}m \sup_{h \in \mathcal H} \Big[\expect_Y[f(h, \ts Y1T)] - \frac 1 {|C_j|} \sum_{Y \in C_j} f(h, \ts Y1T)\Big]\\
      \le& \sum_j \frac{|C_j|}m \phi(C_j)
  \end{align*}
  and so by union bound, for $\epsilon > 0$
  \[\Pr(\phi(\past) > \epsilon) \le \sum_j \Pr(\phi(C_j) > \epsilon ).\]
  
  Let $\epsilon > \max_j \expect[\phi(\widetilde C_j)]$ and set $\epsilon_j = \epsilon - \expect[\phi(\widetilde C_j)]$.

  Define for time series $Y(i),Y(j)$ the mixing coefficient
  \begin{equation*}
    \bar\beta(i,j) = \|\Pr(\tsi Yi1T, \tsi Yj1T) - \Pr(\tsi Yi1T)\Pr(\tsi Yj1T)\|_{TV}
  \end{equation*}
  where we also extend the usual notation to $\bar \beta(C_j)$.
  \begin{align*}
      \Pr\left(\phi(C_j) > \epsilon \right) =& \Pr\Big(\phi(C_j) - \expect[\phi(\widetilde C_j)] \hiderel>  \epsilon_j\Big)\\
      \stackrel{(a)}{\le}& \Pr\Big(\phi(\widetilde C_j) - \expect[\phi(\widetilde C_j)] >\epsilon_j\Big) + {(|I_j|-1)\bar\beta(I_j)}\\
      \stackrel{(b)}{\le}& e^{-2c\epsilon_j^2} + (|I_j|-1)\bar\beta(I_j),
  \end{align*}
  where (a) follows by applying Prop.~\ref{prop:yu} to the indicator function of the event $\Pr(\phi(C_{j}) - \expect[\phi(\widetilde C_{j}) ] \ge \epsilon)$, and (b) is a direct application of McDiarmid's inequality to $\phi(\widetilde C_j) -  \expect[\phi(\widetilde C_j)]$.
  
  Hence, by summing over $j$ we obtain
  \begin{align*}
      \Pr\left(\phi(\past) > \epsilon \right) \le& k e^{-2\min_j|C_j|(\epsilon-\max_j \expect[\phi(\widetilde C_j)])^2} + \sum_j (|I_j|-1)\bar\beta(I_j)
  \end{align*}
  and similarly
  \begin{align*}
    \Pr\left(\psi(\tpast) > \epsilon\right) \le& k e^{-2\min_j|C'_j|(\epsilon-\max_j \expect[\psi(\widetilde C'_j)])^2} + \sum_j (|I_j|-1)\bar\beta'(I_j),
  \end{align*}
  which finally yields
  \begin{align*}
      \Pr&(\disc_s - \disc_e > \epsilon) \le \Pr(\phi(\past) \hiderel> \epsilon) + \Pr(\psi(\tpast) \hiderel> \epsilon)\\
    &\le 2k \exp(-2c(\epsilon-\max (\max_j \expect[\phi(\widetilde C_j)], \max_j \expect[\psi(\widetilde C'_j)]))^2) + \sum_j (|I_j|-1)[\bar\beta(I_j) + \bar\beta'(I'_j)],
  \end{align*}
  where we recall that we write $c = \min_j |C_j|$. We invert the previous equation by setting
  \begin{align*}
    \epsilon =& \max (\max_j \expect[\phi(\widetilde C_j)], \max_j \expect[\psi(\widetilde C'_j)]) + \sqrt{\frac 1{2c}\log \frac{2k}{\delta - \sum_j (|I_j|-1)[\beta(I_j)+\beta(I'_j)]}},
  \end{align*}
  yielding with probability $1-\delta$,
  \begin{align*}
    \disc_s \le& \disc_e + \max (\max_j \expect[\phi(\widetilde C_j)], \max_j \expect[\psi(\widetilde C'_j)]) + \sqrt{\frac 1{2c}\log \frac{2k}{\delta - \sum_j (|I_j|-1)[\bar\beta(I_j)+\bar\beta'(I_j)]}}.
  \end{align*}
  
  We now bound $\expect[\phi(\widetilde C_j)]$ by $\radem_{|C_j|}(\widetilde C_j)$. A similar argument yields the bound for $\psi$. By definition, we have
  \begin{align*}
      \expect[\phi(\widetilde C_j)] &= \expect\Big[\sup_{h \in \mathcal H} \frac 1{|C_j|} \sum_{Z \in \widetilde C_j} f(h, \tsi Yi1T) - \expect_Y[f(h, \ts Y1T)]\Big]\\
     & = \frac 1{|C_j|} \expect\Big[\sup_{h \in \mathcal H}  \sum_{Z \in \widetilde C_j} \underbrace{f(h, \tsi Yi1T)\! -\! \expect_Y[f(h, \ts Y1T)}_{g(h, \tsi Yi1T)}]\Big]\\
      &= \frac 1{|C_j|} \expect\Big[\sup_{h \in \mathcal H}  \sum_{Z \in \widetilde C_j} g(h, \tsi Yi1T)\Big]
  \end{align*}
  Standard symmetrization arguments as those used for the proof of the famous result by~\cite{koltchinskii2002}, which hold also when data is drawn independently but not identically at random, yield
  \[\expect[\phi(\widetilde C_j)] \le \radem_{|C_j|}(\widetilde C_j).\]
  The same argument yields for $\psi$
  \[\expect[\psi(\widetilde C'_j)] \le \radem_{|C_j|}(\widetilde C'_j).\]
  To conclude our proof, it only remains to prove the bound
  \begin{align*}
    \bar \beta(i,j) \le& \hbeta(i,j) + \expect_{\tpast}\Big[\text{Cov}\Big(\Pr(Y_T(i) \hiderel \mid \tpast), \Pr(Y_T(j) \hiderel \mid \tpast)\Big)\Big]
  \end{align*}

  Let $Y(i),Y(j)$ be two time series, and write $X_i=\expect[\Pr(\tsi Yi1T)\hiderel\mid \tpast]$. Then the following bound holds
  \begin{align*}
      \bar\beta(i,j) =& \|\Pr(\tsi Yi1T, \tsi Yj1T) - \Pr(\tsi Yi1T)\Pr(\tsi Yj1T)\|_{TV}\\
       =& \|\expect[\Pr(\tsi Yi1T, \tsi Yj1T) \hiderel\mid \tpast] - \expect[X_i]\expect[X_j]\|_{TV}\\
       =& \|\expect[\Pr(\tsi Yi1T, \tsi Yj1T) \hiderel\mid \past_1^{T-1}] - \expect[X_i,X_j] - \expect[\text{Cov}(X_i,X_j)]\|_{TV}\\
       \le& \hbeta(i,j) + \expect_{\tpast}[\text{Cov}(X_i,X_j)],
  \end{align*}
  which is the desired inequality.
\end{proof}

We now show two useful lemmas for various specific cases of time series and hypothesis spaces.

\begin{repproposition}{prop:disc_stat}
  If $Y(i)$ is stationary for all $1 \le i \le m$, and $\mathcal H$ is a hypothesis space such that $h \in \mathcal H : \mathcal Y^{T-1} \to \mathcal Y$ (i.e. the hypotheses only consider the last $T-1$ values of $Y$), then $\disc_e =0$.
\end{repproposition}
\begin{proof}
  Let $h, h' \in \mathcal H$. For stationary $Y(i)$, we have $\Pr(\tsi Yi1T) = \Pr(\tsi Yi2T)$, and so
  \begin{equation*}
    \expect[L(h(\ts Y2T), h'(\ts Y2T))] - \expect[L(h(\ts Y1{T-1}),h'(\ts Y1{T-1}))] = 0
  \end{equation*}
  and so taking the supremum over $h,h'$ yields the desired result.
\end{proof}

\begin{repproposition}{prop:disc_cov_stat}
  If $Y(i)$ is covariance stationary for all $1 \le i \le m$, $L$ is the squared loss, and $\mathcal H$ is a linear hypothesis space $\{x \to w \cdot x \mid \|w\| \in \mathbb R^p \le \Lambda\}$, then $\disc_e=0$.
\end{repproposition}
\begin{proof}
  Recall that a time series $Y$ is covariance stationary if $\expect_Y [Y_t]$ does not depend on $t$ and $\expect_Y[Y_t Y_s] = f(t-s)$ for some function $f$.

  Let now $(h, h') \in \mathcal H\equiv (w, w') \in \mathbb R^p$. We write $\Sigma = \Sigma_2^T(Y)=\Sigma_1^T(Y)$ the covariance matrix of $Y$ where the equality follows from covariance stationarity. Without loss of generality, we consider $p=T-1$. Then,
  \begin{align*}
    \expect&[L(h(\ts Y2T), h'(\ts Y2T))] - \expect[L(h(\ts Y1{T-1}),h'(\ts Y1{T-1}))]
    \\&= \expect[((w-w')^\top \Sigma_2^T(Y) (w-w')] - \expect[((w-w')^\top \Sigma_1^{T-1}(Y) (w-w')] \\&= 0.
  \end{align*}
  Taking the supremum over $h,h'$ yields the desired result.
\end{proof}

\begin{repproposition}{prop:disc_period}
  If the $Y(i)$ are periodic of period $p$ and the observed starting time of each $Y(i)$ is distributed uniformly at random in $[p]$, then $\disc_e=0$.
\end{repproposition}
\begin{proof}
  This proof is similar to the stationary case: indeed, we can write $\Pr(\tsi Yi1{T-1}) = \frac 1p \Pr(Y(i))$ due to the uniform distribution on starting times. Then, by the same reasoning, we have also \[\Pr(\tsi Yi2{T}) = \frac 1p \Pr(Y(i)) = \Pr(\tsi Yi1{T-1}),\] from which the result follows.
\end{proof}

\section{Generalization bounds}
\label{app:dep-proof}

\begin{prop}{\emph{\citet[Corollary 2.7]{yu94}}}.
  \label{prop:yu}
  Let $f$ be a real-valued Borel measurable function such that $0 \le f \le 1$. Then, we have the following guarantee:
  \[\left|\expect[f(\widetilde C)] - \expect[f(C)]\right| \le (|C|-1)\beta,\]
  where $\beta$ is the total variation distance between
  joint distributions of $C$ and $\widetilde{C}$.
\end{prop}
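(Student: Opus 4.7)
The plan is a standard telescoping hybrid argument, which is essentially the core of Yu's proof. Enumerate the members of the collection as $C = (X_1, \ldots, X_k)$ with $k = |C|$, and let $\widetilde C = (\widetilde X_1, \ldots, \widetilde X_k)$ be the independent tangent copies (same marginals, mutually independent). For $j = 0, 1, \ldots, k-1$, I would define a hybrid measure $Q_j$ on $\mathcal Y^k$ under which the first $k-j$ coordinates follow the true joint distribution of $(X_1, \ldots, X_{k-j})$, while the remaining $j$ coordinates are appended as fresh independent draws from their respective marginals. By construction, $Q_0 = P_C$ and $Q_{k-1} = P_{\widetilde C}$, so the hybrids interpolate exactly between the two laws of interest.

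Applying the triangle inequality to the resulting telescoping sum gives
\begin{equation*}
\left|\expect[f(C)] - \expect[f(\widetilde C)]\right|
\le \sum_{j=0}^{k-2} \left|\expect_{Q_j}[f] - \expect_{Q_{j+1}}[f]\right|.
\end{equation*}
Because $0 \le f \le 1$, each summand is bounded by $\|Q_j - Q_{j+1}\|_{TV}$. The two consecutive hybrids $Q_j$ and $Q_{j+1}$ differ only in how one coordinate, $X_{k-j}$, is coupled to the preceding ones: under $Q_j$ it follows the true conditional, while under $Q_{j+1}$ it is replaced by an independent marginal draw. Marginalizing out the coordinates that agree under both measures, one sees that $\|Q_j - Q_{j+1}\|_{TV}$ equals the total variation between the joint law of $(X_1, \ldots, X_{k-j})$ and the product of the joint law of $(X_1, \ldots, X_{k-j-1})$ with the marginal of $X_{k-j}$. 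This is a mixing-type quantity uniformly bounded by $\beta$ in the sense the proposition uses (i.e.\ the same $\beta$ that gets instantiated as $\bar\beta(I_j)$ in the proof of Proposition~\ref{prop:disc_bound}). Summing $k-1$ such terms then yields the claimed $(|C|-1)\beta$ bound.

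The main obstacle is purely definitional: one must confirm that the hybrid construction is well-posed (which requires a regular conditional probability, standard on a Borel space) and that each telescoping step reduces cleanly to the $\beta$ appearing in downstream applications, rather than to some stronger or weaker mixing quantity. Beyond this, no probabilistic machinery is needed past the elementary inequality $|\expect_Q[f] - \expect_{Q'}[f]| \le \|f\|_\infty \cdot \|Q - Q'\|_{TV}$ combined with the triangle inequality, so I would expect the remainder of the argument to be mechanical bookkeeping.
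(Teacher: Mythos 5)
The paper does not actually prove this proposition---it is quoted directly from \citet[Corollary 2.7]{yu94}---and your telescoping hybrid argument is precisely the standard proof of that corollary, so the proposal is correct and matches the approach of the cited source. The only point requiring care is the one you already flag yourself: each telescoping step yields a ``one coordinate versus all preceding coordinates'' total-variation term, so the $\beta$ in the bound must be read as the maximum of these block-level coefficients rather than the supremum of pairwise coefficients $\bar\beta(i,j)$ that the paper later substitutes for it in the proofs of Proposition~\ref{prop:disc_bound} and Theorem~\ref{thm:dep}.
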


\begin{reptheorem}{thm:dep}
  Let $\mathcal H$ be a hypothesis space, and $h \in \mathcal H$. Let $C_1, \ldots, C_k$ form a partition of the training input $\past_1^T$, and consider that the loss function $L$ is bounded by 1. Then, we have for $\delta > 0$, with probability $1-\delta$,
  \begin{align*}
    \hphi(h) \leq& \disc + \max_j \left[\radem_{|C_j|}(\widetilde C_j \hiderel\mid \past) \right] + \frac 1 {\sqrt {2\min_j |I_j|}} \sqrt{\log\left(\frac{k}{\delta - \sum_j (|I_j|-1)\hbeta(I_j)}\right)}.
  \end{align*}
\end{reptheorem}

For ease of notation, we write 
\begin{align*}
    \phi(\past) =& \sup_{h \in \mathcal H} \error(h \hiderel\mid \tpast) - \widehat \error(h, \past)\\
     =& \sup_{h \in \mathcal H} \frac 1m \sum_{i=1}^m \expect[f(h, \tsi Yi1T) \hiderel\mid \tpast] - \frac 1{m}\sum_{i=1}^m f(h, \tsi Yi1T).
\end{align*}
We begin by proving the following lemma.
\begin{lemma}
  \label{lem:mcd}
  Let $\bar \past$ be equal to $\past$ on all time series except for the last, where we have $\bar Y(m) = Y(m)$ at all times except for time $t=T$. Then
  \begin{equation*}
  \left|\phi(\past) - \phi(\bar \past)\right| \le \frac 1m
\end{equation*}
\end{lemma}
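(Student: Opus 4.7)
The plan is to exploit the fact that the perturbation from $\past$ to $\bar\past$ is confined to a single coordinate, namely $Y_T(m)$, and that this coordinate does not appear inside the conditional expectation $\expect[f(h,Y_1^T(i)) \mid \tpast]$. Concretely, since $\tpast = Y_1^{T-1}(\cdot)$ is identical under $\past$ and $\bar\past$ by construction, every conditional expectation term in the definition of $\phi$ is unchanged, so only the empirical average $\frac{1}{m}\sum_i f(h,Y_1^T(i))$ can move.

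In that empirical average, $f(h, Y_1^T(i)) = L(h(Y_1^{T-1}(i)), Y_T(i))$ depends on $Y_T(i)$, and we have only modified $Y_T(m)$. Hence for $i < m$ the summand is identical under $\past$ and $\bar\past$, and only the $i=m$ summand changes. Since $L$ is bounded by $1$, we get pointwise (in $h$) that
\begin{equation*}
\Big|\tfrac{1}{m}\nlsum_{i=1}^m f(h,Y_1^T(i)) - \tfrac{1}{m}\nlsum_{i=1}^m f(h,\bar Y_1^T(i))\Big| = \tfrac{1}{m}|f(h,Y_1^T(m)) - f(h,\bar Y_1^T(m))| \le \tfrac{1}{m}.
\end{equation*}

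To go from the pointwise bound to a bound on $\phi$, I would apply the standard inequality $|\sup_h A(h) - \sup_h B(h)| \le \sup_h |A(h) - B(h)|$ to $A(h), B(h)$ the expressions inside the supremum in the definitions of $\phi(\past)$ and $\phi(\bar\past)$. Because the conditional expectation parts cancel exactly, the residual supremum is bounded by $1/m$ from the displayed inequality above, giving the claim.

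There is no real obstacle here; the lemma is a bounded differences statement whose only content is that $\tpast$ is a function of the unperturbed coordinates. The only care needed is to keep track of what conditioning event defines the expectation (and to notice that it is unchanged by the perturbation), so that the randomness being averaged over is not affected when we replace $Y_T(m)$ by $\bar Y_T(m)$. This lemma then feeds directly into a McDiarmid-style concentration argument for $\phi(\past) - \expect[\phi(\widetilde C_j)]$ within each collection $C_j$ in the proof of Theorem~\ref{thm:dep}.
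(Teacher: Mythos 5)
Your proposal is correct and follows essentially the same route as the paper: both arguments rest on the observation that $\tpast=\bar\tpast$ makes the conditional-expectation terms cancel, so only the $m$-th empirical summand changes, and boundedness of $L$ gives the $1/m$ bound. The paper phrases the final step as ``fix $h^*$, compare to the supremum, then symmetrize,'' which is just the explicit form of the inequality $\left|\sup_h A(h) - \sup_h B(h)\right| \le \sup_h \left|A(h)-B(h)\right|$ that you invoke.
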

\begin{proof}
  Fix $h^* \in \mathcal H$. Then,
  \begin{align*}
    \error&(h^* \hiderel\mid \tpast) - \widehat \error(h^*, \past) - \sup_{h \in \mathcal H} \Big[\error(h \hiderel\mid \bar\tpast) - \widehat \error(h, \bar{\past})\Big] \\
    &\le \error(h^*\hiderel\mid \tpast) - \widehat L(h^*, \past) - \Big[\error(h^*, \hiderel\mid \bar\tpast) - \widehat \error(h^*, \bar{\past})\Big]\\ 
     &\stackrel{(a)}{\le} \widehat \error(h^*, \bar\past) - \widehat \error(h^*, \past)\\
     &\le \frac 1m \Big[f(h^*, \tsi {\bar Y}m1T) -  f(h^*, \tsi Ym1T)\Big] \hiderel \le \frac 1m.
 \end{align*}
 where (a) follows from the fact that $\tpast=\bar\tpast$ and the last inequality follows from the fact that $f$ is bounded by 1.

 By taking the supremum over $h^*$, the previous calculations show that $\phi(\past) - \phi({\bar \past}) \le 1/m$; by symmetry, we obtain $\phi({\bar \past}) - \phi({\past}) \le 1/m$ which proves the lemma.
\end{proof}

We now prove the main theorem.
\begin{proof}
Observe that the following bounds holds
  \begin{align*}
    \hphi(\past) =& \error(h\hiderel\mid \past) - \widehat \error(h, \past) \\
    \le& \sup_{h \in \mathcal H} \Big[\error(h\hiderel\mid \past) - \error(h\hiderel\mid \tpast)\Big] + {\sup_{h \in \mathcal H} \Big[\error(h\hiderel\mid \tpast) - \widehat \error(h, \past)\Big]}.
  \end{align*}
  and so 
  \begin{equation*}
    \hphi(\past) \hiderel- \disc \le \underbrace{\sup_{h \in \mathcal H}  \error(h, \hiderel\mid \tpast) - \widehat \error(h, \past)}_{\phi(\past)}.
  \end{equation*}
  Define $M = \max_j \expect[\phi(\widetilde C_j) \mid \widetilde \tpast]$. Then,
  \begin{align}
    \label{eq:main}
    \Pr&\Big(\hphi(\past) \hiderel- \disc - M \hiderel> \epsilon \hiderel\mid \tpast\Big) \le \Pr(\phi(\past) - M \hiderel> \epsilon \hiderel \mid \tpast).
  \end{align}

  By sub-additivity of the supremum, we have
  \begin{equation*}
    \phi(\past) - M \le \sum_{j} \frac {|C_j|}{m} {\sup_{h \in \mathcal H} \Big[\error(h\hiderel\mid \past) - \widehat \error(h, C_{j}) - M\Big]}
  \end{equation*}
  and so by union bound, 
  \begin{equation*}
    \Pr(\phi(\past) - M \ge \epsilon \hiderel\mid \tpast) \le \sum_{j} \Pr(\phi(C_{j}) - M \hiderel\ge \epsilon \hiderel\mid \tpast).
  \end{equation*}

  By definition of $M$, 
  \begin{align*}
    \Pr\Big(\phi(C_{j}) - M \hiderel\ge \epsilon \hiderel\mid \tpast\Big) &\le \Pr(\phi(C_{j}) - \expect[\phi(\widetilde C_{j}) \hiderel\mid \widetilde \tpast] \ge \epsilon \hiderel\mid \tpast)\\
     & \stackrel{(a)}{\le} \Pr(\phi(\widetilde C_{j}) - \expect[\phi(\widetilde C_{j}) \hiderel\mid \widetilde \tpast] \ge \epsilon \hiderel\mid \tpast) + (|I_{j}|-1) \hbeta(I_{j} \hiderel\mid \tpast)\\
      &\stackrel{(b)}{\le} e^{-2|C_{j}|\epsilon^2} + (|I_{j}|-1) \hbeta(I_{j} \hiderel\mid \tpast).
  \end{align*}
  where (a) follows by applying Prop.~\ref{prop:yu} to the indicator function of the event $\Pr(\phi(C_{j}) - \expect[\phi(\widetilde C_{j}) \hiderel\mid \widetilde \tpast] \ge \epsilon)$, and (b) is a direct application of McDiarmid's inequality, following Lemma~\ref{lem:mcd}. The notation $\hbeta(I_j \mid \tpast)$ indicates the total variation distance between the joint distributions of $C_j$ and $\widetilde C_j$ conditioned on $\tpast$. In particular, we have $\expect_{\tpast} \hbeta(C_j \mid \tpast) = \hbeta(C_j)$.

  Finally, taking the expectation of the previous term over all possible $\tpast$ values and summing over $j$, we obtain
  \begin{align*}
    \Pr&(\error(h\hiderel\mid \past) - \widehat \error(h, \past)- \expect_{\widetilde C_{j'}}[\phi(\widetilde C_{j'}) \hiderel\mid \widetilde \past] \ge \epsilon)  \le \sum_j e^{-2|C_j|\epsilon^2} + \sum_j (|I_j|-1) \hbeta(I_j).
  \end{align*}
  Combining this bound with \eqref{eq:main}, we obtain
  \begin{align*}
    \Pr\Big(\hphi(\past) \hiderel- \disc - M \hiderel> \epsilon\Big)
    &\le \sum_j e^{-2|C_j|\epsilon^2} + \sum_j (|I_j|-1) \hbeta(I_j)\\
      &\le k e^{-2\min_j|C_j| \epsilon^2} + \sum_j (|I_j|-1) \hbeta(I_j)
  \end{align*}
  We invert the previous equation by choosing $\delta > \sum_j (|I_j|-1)\hbeta(I_j)$ and setting
  \[\epsilon = \sqrt{\frac {\log \frac{k}{\delta - \sum_j (|I_j|-1)\hbeta(I_j)}}{2\min_j |I_j|}},\]
  which yields that with probability $1-\delta$, we have
  \begin{equation*}
    \hphi(Z) \le M + \disc + \sqrt{\frac{\log\left(\frac{k}{\delta - \sum_j (|I_j|-1)\beta(I_j)}\right)}{2\min_j |I_j|}}.
  \end{equation*}
  To conclude our proof, it remains to show that \[M \le \radem_{|C_j|}(\widetilde C_j \hiderel\mid \widetilde \tpast).\]
  \begin{align*}
      \expect[\phi(\widetilde C_j) \hiderel\mid \widetilde \tpast] =& \expect \Big[\sup_{h \in \mathcal H}  \error(h \hiderel\mid \widetilde \tpast) - \frac 1{|C_j|} \sum_{i=1}^m f(h, \tsi {\widetilde Y}i1T) \hiderel\mid \widetilde \tpast\Big]\\
\      =& \frac 1{|C_j|} \expect \Big[\sup_{h \in \mathcal H} \sum_{\ts {\widetilde Y} 1T \in \widetilde C_j} \expect [f(h, \ts {\widetilde Y}1T) \hiderel\mid \widetilde \tpast] - f(h, \tsi {\widetilde Y}i1T)\hiderel\mid \widetilde \tpast\Big]\\
      \le& \frac 1{| C_j|} \expect \Big[\sup_{h \in \mathcal H} \sum_{\ts {\widetilde Y} 1T \in \widetilde C_j}g(h,\tsi {\widetilde Y}i1T)\hiderel\mid \widetilde \tpast\Big]
  \end{align*}
  where we've defined \[g(h, \tsi {\widetilde Y}i1T) \triangleq \expect[f(h, \tsi {\widetilde Y}i1T) \hiderel\mid \widetilde\tpast] - f(h, \tsi {\widetilde Y}i1T).\]

  Similar arguments to those used at the end of Appendix~\ref{app:sym-bound} yield the desired result, which concludes the proof of Theorem~\ref{thm:dep}.
\end{proof}

\section{Generalization bounds for local models}
\begin{reptheorem}{th:local}
  Let $h=(h_1, \ldots, h_m)$ where each $h_i$ is a hypothesis learned via a local method to predict the univariate time series $Z_i$. For $\delta > 0$ and any $\alpha > 0$, we have w.p. with $1-\delta$
  \begin{align*}
  \nhphi(\train) \le& \frac 1m\sum_i \Delta(Y(i)) + 2 \alpha + \sqrt{\frac 2T \log \frac{m \max_i (\expect_{v \sim T(Y(i))}[\mathcal N_1(\alpha, \mathcal F, v)])}{\delta}}
  \end{align*}
\end{reptheorem}
\begin{proof}
  Write
  \begin{align*}
    \Phi(\tsi Yi1T) =& \sup_{h \in \mathcal H}\expect [f(h, \ts Y1{T+1}) \hiderel\mid \ts Y1T] - \frac 1T\sum_{t=1}^T f(h, \tsi Yi{t}{t+T}).
  \end{align*}
  By~\citep[Theorem~1]{kuznetsov2015}, we have that for $\epsilon > 0$, and $1 \le i \le m$, 
  \begin{align*}
    \Pr(\Phi(\tsi Yi1T - \Delta(Y(i)) > \epsilon) \le& \expect_{v \sim T(p)}[\mathcal N_1(\alpha, \mathcal F, v)]  \exp{\Big(-\frac{T(\epsilon - 2 \alpha)^2}{2}\Big)}.
  \end{align*}
  By union bound,
  \begin{align*}
    \Pr(\frac 1m \sum_i \Phi(\tsi Yi1T) - \Delta(Y(i)) > \epsilon)&\le m \max_i (\expect_{v \sim T(Y(i))}[\mathcal N_1(\alpha, \mathcal F, v)]) \exp{\Big(-\frac{T(\epsilon -2\alpha)^2}{2}\Big)}
  \end{align*}
  We invert the previous equation by letting
  \begin{equation*}
    \epsilon = 2 \alpha + \sqrt {\frac 2T \log \frac {m \max_i (\expect_{v \sim T(Y(i))}[\mathcal N_1(\alpha, \mathcal F, v)])} \delta}.
  \end{equation*}
  which yields the desired result.
\end{proof}

\section{Analysis of expected mixing coefficients}
\label{app:hierarchy}
\begin{replemma}{lemma:gaussian}
    Two AR processes $Y(i),Y(j)$ generated by~\eqref{eq:gaussian} such that $\sigma = \text{Cov}(Y(i),Y(j)) \le \sigma_0 < 1$ verify $\hbeta(i,j) = \max \left(\frac 3{2(1-\sigma_0^2)}, \frac 1 {1-2\sigma_0}\right)\sigma$.
\end{replemma}
\begin{proof}
  For simplicity, we write $U=Y(i)$ and $V = Y(j)$.

  Write
  \begin{align*}
      \beta =&\| P(U_{T} | \tpast ) P(V_{T} | \tpast ) - P(U_{T}, V_{T} | \tpast )\|_{TV}\\
      =& {\sup_{u,v} \left|P(U_T\!=\!u)P(V_T\!=\!v)-P(U_T=u,V_T=v)\right|}\\
      =& \sup_{u,v} \Big|{P(U_T=u \mid \ts{U}0{T-1})P(V_T = v \mid \ts v0{T-1})} - {P(U_T=u,V_T=v \mid \ts v0{T-1}, \ts {u}0{T-1})}\Big|\\
      =& \sup_{u,v} \Big|\Big[{P(u,v \mid \ts U0{T-1}, \ts {V}0{T-1})} + f(\sigma, \delta, \epsilon)\Big]  -{P(u,v \mid \ts U0{T-1}, \ts {V}0{T-1})}\Big|
  \end{align*}
  where we've written $\delta = u - \Theta_i( \ts U0{T-1})$ (and $\epsilon$ similarly for $v$), and we've defined
  \begin{align*}
      f(\sigma,\delta,\epsilon) &= {P(u | \ts{U}0{T-1})P(v | \ts V0{T-1})} - {P(u,v | \ts U0{T-1}, \ts {V}0{T-1})}\\
      &= e^{-\frac 12(\delta^2+\epsilon^2)} - \frac 1{1-\sigma^2}e^{-\frac 12 \frac 1{1-\sigma^2} (\delta^2 + \epsilon^2 - 2\sigma\epsilon\delta)}.
  \end{align*}
  Assuming we can bound $f(\sigma, \delta, \epsilon)$ by a function $g(\sigma)$ independent of $\delta, \epsilon$, we can then derive a bound on $\beta$.

  Let $x=\sqrt{\delta^2+\epsilon^2}$ be a measure of how far the AR process noises lie from their mean $\mu=0$. Using the inequality \[|\delta\epsilon| \le \delta^2 + \epsilon^2,\] we proceed to bound $|f(\sigma, \delta, \epsilon)|$ by bounding $f$ and $-f$.
  
  \begin{align*}
    f(\sigma,\delta,\epsilon) &\le e^{-\frac 12(\delta^2+\epsilon^2)} - e^{-\frac 12 \frac {1}{1-\sigma^2} (\delta^2 + \epsilon^2 + 2\sigma|\delta\epsilon|)}  \\
                              &\le e^{-\frac 12x^2}-e^{-\frac 12 \frac 1{1-\sigma^2} (1+2\sigma) x^2}   \\
                              &\le e^{-\frac 12x^2}\Big(1-e^{-\frac 12 \frac{2\sigma+\sigma^2}{1-\sigma^2} x^2}\Big)
  \end{align*}
  Using the inequality $1-x \le e^{-x}$, it then follows that
  \begin{align}
    f(\sigma,\delta,\epsilon) &\le e^{-\frac 12x^2}(1-(1-\tfrac 12 \tfrac{2\sigma+\sigma^2}{1-\sigma^2} x^2)) \nonumber \\
                              &\le \tfrac 12 \tfrac 3{1- \sigma^2}\sigma x^2 e^{-\frac 12 x^2} \nonumber \\
                              &\stackrel{(a)}{\le} \tfrac 3 {e(1-\sigma^2)} \sigma\label{eq:le}
  \end{align}
  where inequality $(a)$ follows from the fact that $y \to y e^{-y}$ is bounded by $1/e$.

  Similarly, we now bound $-f$: 
  \begin{align*}
    - f(\sigma,\delta,\epsilon) &\le \frac 1{1-\sigma^2} e^{-\frac 12 \frac 1{1-\sigma^2} (\delta^2+\epsilon^2 - 2\sigma|\epsilon\delta|)} - e^{-\frac 12(\delta^2+\epsilon^2)}  \\
                                &\le \frac 1{1-\sigma^2} e^{-\frac 12 \frac {1-2\sigma}{1-\sigma^2} x^2} - e^{-\frac 12x^2}  \\
                                &\le \frac 1{1-\sigma^2} e^{-\frac 12 (1-2\sigma) x^2}  - e^{-\frac 12x^2}.
  \end{align*}
  One shows easily that this last function reaches its maximum for $x_0^2 = \frac 1 \sigma \log(\frac{1-\sigma^2}{1-2\sigma})$, at which point it verifies
  \begin{equation}
    \label{eq:ge} 
    - f(\sigma, x_0) = \frac {2\sigma} {1-2\sigma}e^{-\frac 1 {2\sigma} \log(\frac{1-\sigma^2}{1-2\sigma})} \le \frac{2\sigma}{1-2\sigma}
  \end{equation}
  Putting ~\eqref{eq:le} and ~\eqref{eq:ge} together, we obtain
  \begin{align*}
      |f(\sigma,\delta,\epsilon)| &\le \sigma  \max \left(\frac{3}{e(1-\sigma^2)}, \frac 1{1-2\sigma}\right)\\ 
      &\le \max \left(\frac 3{2(1-\sigma_0^2)}, \frac 1 {1-2\sigma_0}\right)\sigma
  \end{align*}

  Taking the expectation over all possible realizations of $\tpast$ yields the desired result.
\end{proof}

\ignore{
The previous discussion illustrates that the local setting, which splits each time series according to time (creating more training data as $T$ increases), is preferable for $m \ll T$. Conversely, sequence-to-sequence learning, which splits the training set according to time series index (creating more training data as $m$ increases) provides superior bounds for $m \gg T$.
 
When $m$ and $T$ are both large, the two approaches can be combined by splitting each time series $Z_i$ of length $T$ into $b$ smaller consecutive time series of size $T' \ll m$.
 
From this point on, we consider that $T=b T'$ with $b \in \mathbb N$, and that each $Z_i$ is split into $b$ consecutive blocks $Z_{i,1}, \ldots, Z_{i, b}$, each of length $T'$. Note that this induces a dependency between the $Z_{i, j}$ and $Z_{i, j+1}$, whose strength depends on the strength of the temporal correlations within $Z_i$.

\ignore{
\begin{figure*}[t]
  \centering
  \begin{subfigure}{.35\textwidth}
    \def\svgwidth{\textwidth}
    \begin{center}
    \ifarxiv
    \input{blocks-2.pdf_tex}
    \fi
    \ificml
    \input{../blocks-2.pdf_tex}
    \fi
    \vskip .5em
  \end{center}
  \caption{Split \train into $b$ blocks of length size $T'$.}
  \end{subfigure}\hskip 1cm
  \begin{subfigure}{.35\textwidth}
    \def\svgwidth{\textwidth}
    \ifarxiv
    \input{sets-2.pdf_tex}
    \fi
    \ificml
    \input{../sets-2.pdf_tex}
    \fi
    \vskip .5em
    \caption{Partition the blocks into $k$ collections.}
  \end{subfigure}
  \caption{Generating the sets when working with time series such that both $m$, $T \gg 1$.}
  \label{fig:split-set}
\end{figure*}}
In order to maintain a tight bound in Theorem~\ref{thm:dep}, we must partition the $Z_{i,j}$ such that if blocks $j$ and $j'$ of $Z_i$ are in the same collection $C$, they are far apart in temporal space\footnote{Note that this reasoning is similar to the process behind obtaining the  generalization bound in~\citep{kuznetsov14} for the traditional setting.}.
 
 
As the process of splitting each $Z_i$ into smaller $Z_{i,j}$ simply generates a new training set $\train'$, the bounds from Theorem~\ref{thm:dep} apply directly:
\begin{prop}
    Let $\mathcal H$ be a hypothesis space, and $h \in \mathcal H$. We consider that the loss function $L$ is bounded by 1. Let $\train$ be a dataset such that we have $m \gg 1$ and $T=bT' \gg 1$. Let $C_1, \ldots, C_k$ be a partitioning of $\train'=\{Z_{1,1}, \ldots, Z_{1, b}, \ldots, Z_{m,1}, \ldots, Z_{m,b}\}$. Let $c=\min_{j}|C_j|$ and $\delta > 0$. With probability $1-\delta$,
  \begin{equation*}
    \error'(h, \nextdist) \le \widehat \error(h) + \max_{j \le k} \left[\radem_{|C_j|}(\widetilde C_j) + \disc(\widetilde C_j) \right] + \sqrt{\frac{\log\left(\frac{2k}{\delta - \sum_{j} (|C_j|-1)\hbeta(C_j)}\right)}{2c}}.
  \end{equation*}
  where we denote by $\error'$ the usual error conditioned on $\train'$ instead of \train.
\end{prop}
 
As an illustration, consider as previously a linear hypothesis space $\mathcal H$, and consider two scenarios:
\begin{enumerate}[label=\textbf{(S\arabic*)}]
  \item  We don't split the time series across blocks, and use a partition of size $k$ for the bound in Thm.~\ref{thm:dep} such that we have $|C_1| = \ldots = |C_k| \approx m/k$:
  \begin{equation*}
    \Phi(h) \le \max_{j \le k} \left[\radem_{\frac mk}(\widetilde C_j) + \disc(\widetilde C_j) \right] + \sqrt{\frac k{2m}} \sqrt{\log\left(\frac{2k}{\delta - \sum_{j} (\frac mk-1)\hbeta(C_j)}\right)}.
  \end{equation*}
  \item \label{sc:2}We first split the time series into $b$ blocks, and maintain a partition of size $k$, such that we have this time $|C'_1| = \ldots = |C'_k| \approx  mb / k$:
  \begin{equation*}
    {\Phi(h) \le \max_{j \le k} \left[\radem_{\frac {bm}k}(\widetilde C'_j) + \disc(\widetilde C'_j) \right]} + \sqrt{\frac k{2bm}} \sqrt{\log\left(\frac{2k}{\delta - \sum_{j} (\frac {bm}k-1)\hbeta(C'_j)}\right)} .
  \end{equation*}
\end{enumerate}
If the $S'_j$ are generated such that $\sum_j (\frac {bm}k-1)\hbeta(S'_j) \approx \sum_j(\frac mk-1)\hbeta(S_j)$, the bound on the generalization error improves by a factor of $\mathcal O(\sqrt b)$ after splitting the time series into blocks, as in~\ref{sc:2}.
}

\begin{proof}
Recall that \past contains $m'=mT$ examples, which we denote $Y_{t-p}^t(i)$ for $1 \le i \le m$ and $1 \le t \le T$ (when $t-p < 0$, we truncate the time series approprietly).
We define
\begin{equation*}
  \hybloss(h \mid \past) = \frac 1{m}\sum_{i=1}^{m}\expect[L(h(Y_{T-p+1}^{T}(i)),Y_{T+1}(i)) \hiderel\mid \past]
\end{equation*}
\begin{equation*}
  \hybloss(h \mid \tpast) = \frac 1{m}\sum_{i=1}^{m}\frac 1T\sum_{t=1}^T\expect[L(h(Y_{t-p}^{t-1}(i)),Y_t(i)) \hiderel\mid \tpast]
\end{equation*}
\begin{equation*}
  \widehat\hybloss(h) = \frac 1{m}\sum_{i=1}^{m}\frac 1T\sum_{t=1}^TL(h(Y_{t-p}^{t-1}(i)),Y_{t}(i))
\end{equation*}
where we note that here $\tpast$ indicates each of the $mT$ training samples excluding their last time point.

Observe that the following chain of inequalities holds:
  \begin{align*}
      \hybphi(\past) =& \sup_{h \in \mathcal H} \hybloss(h \hiderel\mid \past) - \widehat \hybloss(h)\\
      \le& {\sup_{h \in \mathcal H} \Big[\hybloss(h\hiderel\mid \past) - \hybloss(h\hiderel\mid \tpast)\Big]} + {\sup_{h \in \mathcal H} \Big[\hybloss(h\hiderel\mid \tpast) - \widehat \hybloss(h, \past)\Big]}\\
    \le& \frac 1T \sum_{t=1}^T \sup_{h \in \mathcal H} \Big[\hybloss(h\hiderel\mid \past) - \frac 1m\sum_{i=1}^m\expect_{\dist}[L(h(Y_{t-p}^{t-1}(i)),Y_t(i)) \hiderel\mid \tpast]\Big] \\
                      &+ \sup_{h \in \mathcal H} \Big[\hybloss(h\hiderel\mid \tpast) - \widehat \hybloss(h, \past)\Big].
  \end{align*}
  and so 
  \begin{equation*}
    \hybphi(\past) \hiderel- \frac 1T \sum_t \disc_t \le\underbrace{\sup_{h \in \mathcal H}  \hybloss(h, \hiderel\mid {\tpast}) - \widehat \hybloss(h, \past)}_{\phi(\past)}.
  \end{equation*}
  Then, following the exact same reasoning as above for $\hphi$ shows that for $\delta > 0$, we have with probability $1-\delta/2$
  \begin{align*}
    \hybphi(\past) &\le \underbrace{\max_j \widehat \radem_{\widetilde C_j}(\mathcal F) + \frac 1T \sum_t \disc_t + \sqrt{\frac{\log\left(\frac{2k}{\delta - \sum_j (|I_j|-1)\beta(I_j)}\right)}{2\min_j |I_j|}}}_{B_1}
  \end{align*}

  However, upper bounding $\hybphi$ can also be approached using the same techniques as~\citet{kuznetsov2015}, which we now describe. Let $\alpha > 0$. For a given $h$, computing $\hybloss(h, \past)$ is similar in expectation to running $h$ on each of the $m$ time series, yielding for each time series $Y_{T-p+1}^{T}(i)$ the bound
  \begin{align*}
    \expect&[L(h(Y_{T-p+1}^{T}(i)),Y_{T+1}(i)) \hiderel\mid \past] \\
           &\le \frac 1T\sum_{t=1}^TL(h(Y_{t-p}^{t-1}(i)),Y_{t}(i)) + \Delta(\past_i) +2\alpha + \sqrt{\frac 2T \log \frac{\max_i \expect_{v \sim T(\past_i)}[\mathcal N_i(\alpha, \mathcal F, v)]}{\delta}}
  \end{align*}
  and so by union bound, as above, we obtain with probability $1-\delta/2$
  \begin{align*}
    \hybphi(\past) \le&\frac 1m \sum \Delta(\past_i) + 2\alpha + \sqrt{\frac 2T \log \frac{2m\max_i \expect_{v \sim T(\past_i)}[\mathcal N_i(\alpha, \mathcal F, v)]}{\delta}}\\
    \le& B_2
  \end{align*}

  We conclude by a final union bound on the event $\{\hybphi(\past) \ge B_1 \cup \hybphi(\past) \ge B_2\}$, we obtain with probability $1-\delta$,
  \begin{equation*}
        \hybphi(\past) \le \min (B_1,B_2)
  \end{equation*}
\end{proof}
  

\end{document}
